\newtheorem{thm}{Theorem} 
\newtheorem{lem}[thm]{Lemma}
\newtheorem{prop}[thm]{Proposition}
\newtheorem{dftemp}[thm]{Definition}
\newtheorem{extemp}[thm]{Example}
\newtheorem{rmktemp}[thm]{Remark}
\newtheorem{convtemp}[thm]{Convention}
\newenvironment{defn}{\begin{dftemp}\normalfont}{\end{dftemp}}
\newenvironment{rem}{\begin{rmktemp}\normalfont}{\end{rmktemp}}
\begin{document}
 
\title{ A Polynomial Time Approximation Scheme for a Single Machine Scheduling Problem Using a Hybrid Evolutionary Algorithm }

\author{Boris Mitavskiy and Jun He~\thanks{Boris Mitavskiy and Jun He are with Department of Computer Science, Aberystwyth University, Aberystwyth, SY23 3DB, U.K.  }}

\maketitle

\begin{abstract}
Nowadays hybrid evolutionary algorithms, i.e, heuristic search algorithms combining several mutation operators some of which are meant to implement stochastically a well known technique designed for the specific problem in question while some others playing the role of random search, have become rather popular for tackling various NP-hard optimization problems. While empirical studies demonstrate that hybrid evolutionary algorithms are frequently successful at finding solutions having fitness sufficiently close to the optimal, many fewer articles address the computational complexity in a mathematically rigorous fashion. This paper is devoted to a mathematically motivated design and analysis of a parameterized family of evolutionary algorithms which provides a polynomial time approximation scheme for one of the well-known NP-hard combinatorial optimization problems, namely the ``single machine scheduling problem without precedence constraints". The authors hope that the techniques and ideas developed in this article may be applied in many other situations.  
\end{abstract}


\section{Introduction}
Scheduling problems appear naturally in a variety of applications whenever one needs to make decisions about the order in which tasks (such as parallel subroutines in computer programs, transportation of passengers or goods, arranging examination timetables in universities etc.) are to be executed economically subject to limited resources (such as limited number of processors, vehicles, employees, etc.) and other constraints (such as the timing when various tasks become available or dependence of a certain task upon the outcome of another task etc.) As it is usually the case in combinatorial optimization, most scheduling problems are NP-hard (see, for instance \cite{LeslieHall} and \cite{LenstraScheduleComplexity}) and, therefore, various heuristic search algorithms are frequently exploited to tackle them. Among these are the so-called hybrid evolutionary algorithms. Such techniques combine random mutation or recombination operators with other algorithms (such as various greedy approaches, for instance). A hybrid evolutionary algorithm for Graph Coloring developed in \cite{galinier1999hybrid} embeds local search into the framework of evolutionary algorithms. This hybrid evolutionary algorithm combines a new class of highly specialized crossover operators and a well-known tabu search algorithm. Experiments with such a hybrid algorithm have been carried out on large DIMACS Challenge benchmark graphs and the results have proven very competitive with and even better than those of state-of-the-art algorithms. A Hybrid evolutionary algorithm for Job Scheduling presented in \cite{xhafa2007hybrid} combines memetic algorithms and several local search algorithms. The memetic algorithm is used as the principal heuristic that guides the search and can use any of the 16 local search algorithms during the search process. The local search algorithms used in combination with the MA have been obtained by fixing either the type of the neighborhood or the type of the move; these include swap/move based search, Hill Climbing, Variable Neighborhood Search, and Tabu Search. A popular approach to design hybrid evolutionary algorithms is to combine local search with evolutionary algorithms.

Application-specific, parameterized local search algorithms (PLSAs), in which optimization accuracy can be traded off with run time, arise naturally in many optimization contexts. In \cite{bambha2004systematic}, a novel approach, called simulated heating, for systematically integrating parameterized local search into evolutionary algorithms (EAs) has been introduced. Using the framework of simulated heating, both static and dynamic strategies for systematically managing the tradeoff between PLSA accuracy and optimization effort have been investigated. The goal was to achieve maximum solution quality within a fixed optimization time budget. It has been shown that the simulated heating technique better utilizes the given optimization time resources than standard hybrid methods that employ fixed parameters, and that the technique is less sensitive to these parameter settings. This framework has been applied to three different optimization problems, and the results have been compared to the standard hybrid methods, showing quantitatively that careful management of this tradeoff is necessary to achieve the full potential of an EA/PLSA combination.

Despite their frequent success the choice of a particular algorithm to optimize a certain problem is mainly driven by the intuition or personal bias of its designer towards a certain technique rather than by solid mathematical reasoning so that the reasons behind the high performance usually remain unclear (see \cite{preux1999towards} as well as \cite{Hart2005}).

In the current paper we design a parameterized family of hybrid $2+2$ EAs based on rigorous mathematical approach motivated by the theory presented in \cite{LeslieHall}, that provide a polynomial time approximation scheme for a ``single machine scheduling problem". This type of scheduling problem will be described in detail in section~\ref{technicalPartSect}, nonetheless it is important to know that the problem has been shown to be strongly NP-hard in \cite{LenstraScheduleComplexity} via a polynomial time reduction from the classical 3-PARTITION problem. Our hybrid $2+2$-EA combines a local mutation operator which is based on a greedy heuristic called the Jackson rule together with a given approximation ratio parameter $\epsilon>0$ and a global mutation operator applied with relatively small probability with the aim to explore all the possible local optima. At the end of section~\ref{technicalPartSect} we prove (see theorem~\ref{finalGoalThm}) that given any specified approximation parameter $\epsilon > 0$, our $2+2$-EA finds a schedule within the $(1+\epsilon)$ factor from an optimal schedule after $n^{\frac{1}{\epsilon} + 7}+n^5$ time steps with overwhelmingly high probability: at least $1 - \left(\frac{1}{k \cdot n}\right)^n$ for all sufficiently large $n$, where $k$ is a constant while $n$ is the total number of jobs in the schedule. As a matter of fact, thanks to parts~\ref{exponentiationOfe} and \ref{exponentiationOfeDualEq} of theorem~\ref{propOfe}, the constant $k$ can be made arbitrarily close to $1$ as long as it is strictly smaller than $1$. As a compromise, making $k$ closer to $1$ increases the lower bound on the total number of jobs $n$ beyond which the complexity bound holds.
\section{Asymptotic Notation and Elementary Mathematical Tools Involved in the Design and Complexity Analysis}\label{toolsSect}
The following notation is frequently used in algorithm complexity theory.
\begin{defn}\label{asymptoticNotationDefn}
Given sequences of real numbers $f(n)$ and $g(n)$ (i.e. functions $f \text{ and } g: \mathbb{N} \rightarrow \mathbb{R}$) we say that $f = O(g)$ if $\exists \, k>0$ and $N > 0$ such that $\forall \, n > N$ we have $f(n) \leq k \cdot g(n)$. Dually, we say that $f = \Omega(g)$ if $\exists \, k>0$ and $N > 0$ such that $\forall \, n > N$ we have $f(n) \geq k \cdot g(n)$.
\end{defn}
Recall from elementary analysis that the irrational number $\lim_{n \rightarrow \infty}\left(1 + \frac{1}{n}\right)^n \approx 2.71$ up to two significant digits exists and is commonly denoted by the letter $e$ (see, for instance, \cite{RealAnalysis}). The elementary facts summarized in theorem~\ref{propOfe} below will be frequently exploited throughout the next section.
\begin{thm}\label{propOfe}
The following facts are true:
\begin{equation}\label{exponentiationOfe}
\forall \, x \in \mathbb{R} \text{ we have } \lim_{n \rightarrow \infty}\left(1 + \frac{x}{n}\right)^n = e^x.
\end{equation}
In particular, when $x=-1$ we have
\begin{equation}\label{exponentiationOfePart}
\lim_{n \rightarrow \infty}\left(1 - \frac{1}{n}\right)^n = e^{-1}.
\end{equation}
Let $\beta > \alpha > 0$ and consider the sequence
\begin{equation}\label{exponentiationOfeEqDefn}
s_n = \left(1 - \frac{1}{n^{\alpha}}\right)^{n^{\beta}} = \left(\left(1 - \frac{1}{n^{\alpha}}\right)^{n^{\alpha}}\right)^{n^{\beta - \alpha}}.
\end{equation}
Then $\forall \, k < 1$ $\exists \, N \in \mathbb{N}$ such that $\forall \, n > N$ we have
\begin{equation}\label{exponentiationOfeEq}
s_n < e^{-k \cdot n^{\beta - \alpha}}.
\end{equation}
Dually, assume that $\alpha > \beta > 0$. Then $\forall \, k < 1$ $\exists \, N \in \mathbb{N}$ such that $\forall \, n > N$ we have
\begin{equation}\label{exponentiationOfeDualEq}
s_n > e^{-k \cdot n^{\beta - \alpha}}.
\end{equation}
\begin{equation}\label{basicInequalAboute}
\forall \, x > 0 \text{ the inequality } e^{-x} > 1 - x \text{ holds.}
\end{equation}
\end{thm}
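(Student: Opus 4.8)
The plan is to establish each of the five claims largely independently, treating \eqref{exponentiationOfe} and \eqref{basicInequalAboute} as standard analytic facts and focusing my effort on the two asymptotic inequalities \eqref{exponentiationOfeEq} and \eqref{exponentiationOfeDualEq}, which are the genuine content of the theorem. For \eqref{exponentiationOfe}, I would reduce to the definition of $e$ via the substitution $m = n/x$ (handling $x>0$, $x<0$, and $x=0$ separately), or alternatively take logarithms and apply $\lim_{n\to\infty} n\log(1+x/n) = x$, which follows from the first-order Taylor expansion of $\log(1+u)$ at $u=0$. The special case \eqref{exponentiationOfePart} is then immediate by setting $x=-1$. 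For \eqref{basicInequalAboute}, I would simply note that the strict convexity of $e^{-x}$ places its graph strictly above its tangent line $1-x$ at $x=0$ for all $x>0$ (equivalently, define $h(x) = e^{-x} - (1-x)$, observe $h(0)=0$ and $h'(x) = 1 - e^{-x} > 0$ for $x>0$).

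The heart of the argument is \eqref{exponentiationOfeEq}. The key observation, already signposted by the rewriting in \eqref{exponentiationOfeEqDefn}, is to set $b_n = \left(1 - \frac{1}{n^\alpha}\right)^{n^\alpha}$ so that $s_n = b_n^{\,n^{\beta-\alpha}}$. Since $\beta > \alpha > 0$ we have $n^\alpha \to \infty$, and by \eqref{exponentiationOfePart} applied along the subsequence $n^\alpha$ (more precisely, by \eqref{exponentiationOfe} with $x = -1$ evaluated at the diverging argument $n^\alpha$) we get $b_n \to e^{-1}$. Fix any $k < 1$; then $e^{-k} > e^{-1}$, so by convergence there exists $N$ such that for all $n > N$ we have $b_n < e^{-k}$. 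Raising both sides to the \emph{positive} power $n^{\beta-\alpha}$ (legitimate and order-preserving precisely because $\beta - \alpha > 0$, so the exponent is positive and both bases are positive) yields $s_n = b_n^{\,n^{\beta-\alpha}} < \bigl(e^{-k}\bigr)^{n^{\beta-\alpha}} = e^{-k \cdot n^{\beta-\alpha}}$, which is exactly \eqref{exponentiationOfeEq}.

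The dual claim \eqref{exponentiationOfeDualEq} runs along the same lines but now with $\alpha > \beta > 0$, so the exponent $\beta - \alpha$ is \emph{negative}. Here I must be careful: I would again fix $k < 1$, choose $N$ so that $b_n > e^{-k}$ for $n > N$ (possible since $e^{-k} > e^{-1} = \lim b_n$, so the inequality points the opposite way we need unless handled correctly), and then observe that raising to a negative power reverses the inequality. The cleanest route is to choose $k < 1$ and instead find $N$ with $b_n < e^{-k}$ for large $n$, then raise to the negative power $n^{\beta-\alpha}$, which reverses the direction and gives $s_n > e^{-k \cdot n^{\beta-\alpha}}$. I expect the main obstacle to be exactly this bookkeeping of inequality directions under a negative exponent, together with making sure the chosen bound $e^{-k}$ is compared to the limit $e^{-1}$ on the correct side; once the direction of each comparison is pinned down, both \eqref{exponentiationOfeEq} and \eqref{exponentiationOfeDualEq} follow by a single monotone application of $t \mapsto t^{\,n^{\beta-\alpha}}$ on the positive reals.
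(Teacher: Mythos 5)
Your treatment of \eqref{exponentiationOfe}, \eqref{exponentiationOfePart}, \eqref{basicInequalAboute}, and \eqref{exponentiationOfeEq} is sound, and for \eqref{exponentiationOfeEq} it is essentially the paper's route: reduce to $b_n = \left(1-\frac{1}{n^{\alpha}}\right)^{n^{\alpha}} \rightarrow e^{-1}$, compare $b_n$ with $e^{-k}$, and raise to the power $n^{\beta-\alpha}$. The genuine gap is in your argument for \eqref{exponentiationOfeDualEq}, and it is exactly the sign confusion you flagged as the ``main obstacle'': when $\alpha > \beta$, the quantity $n^{\beta-\alpha} = 1/n^{\alpha-\beta}$ is a \emph{positive} real number (only the symbolic exponent $\beta-\alpha$ is negative). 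The map $t \mapsto t^{c}$ on $(0,\infty)$ is strictly increasing for \emph{every} $c>0$, including $c \in (0,1)$, so no inequality reversal ever occurs. From $b_n < e^{-k}$ your ``cleanest route'' therefore yields $s_n = b_n^{\,n^{\beta-\alpha}} < e^{-k\cdot n^{\beta-\alpha}}$ --- the opposite of what \eqref{exponentiationOfeDualEq} asserts. Your alternative first suggestion, choosing $N$ with $b_n > e^{-k}$, is also impossible: since $k<1$ we have $e^{-k} > e^{-1} = \lim_n b_n$, so $b_n < e^{-k}$ eventually, never the reverse.

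In fact no bookkeeping can rescue the argument, because \eqref{exponentiationOfeDualEq} is false as stated. Dividing logarithms by the positive number $n^{\beta-\alpha}$, the inequality $s_n > e^{-k\cdot n^{\beta-\alpha}}$ is \emph{equivalent} to $b_n > e^{-k}$; but $\log b_n = n^{\alpha}\log\left(1-n^{-\alpha}\right) < -1$ for every $n \geq 2$ (since $\log(1-u) < -u$ for $u \in (0,1)$), so $b_n < e^{-1} < e^{-k}$ for all $n \geq 2$ whenever $k<1$, and \eqref{exponentiationOfeDualEq} fails for every such $n$. The correct dual statement requires $k>1$: then $e^{-k} < e^{-1}$, and $b_n \rightarrow e^{-1}$ gives $b_n > e^{-k}$ for all large $n$, whence $s_n > e^{-k\cdot n^{\beta-\alpha}}$. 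You should know that the paper's own proof is defective at the same spot: it asserts that $\varphi(x) = \left(1-\frac{1}{x}\right)^{x}$ is strictly decreasing and sandwiches $e^{-k} > b_n > e^{-1}$, whereas $\varphi$ is in fact strictly \emph{increasing} to $e^{-1}$ from below (e.g.\ $\varphi(2) = \frac{1}{4} < \varphi(3) = \frac{8}{27} < e^{-1}$), so $b_n > e^{-1}$ never holds. The same observation shows that \eqref{exponentiationOfeEq} actually holds for all $n \geq 2$ with no threshold $N$ at all, and since the downstream applications in lemmas~\ref{localMutationComplexityLem} and \ref{globalMutPropLem} only need the constant in the exponent to be arbitrarily close to $1$, replacing ``$k<1$'' by ``$k>1$'' in \eqref{exponentiationOfeDualEq} repairs both the theorem and its uses.
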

\begin{IEEEproof}
\ref{exponentiationOfe} and \ref{exponentiationOfePart} can be found in any introductory real analysis textbook: see, for instance, \cite{RealAnalysis}. In summary, the arguments are based on binomial expansion and power series. To see inequalities~\ref{exponentiationOfeEq} and \ref{exponentiationOfeDualEq}, observe that the function $\varphi(x) = \left(1 - \frac{1}{x}\right)^x$ is strictly decreasing on the interval $(0, \, \infty)$ since $\varphi '(x) < 0$. Now let $k<1$ be given. Monotonicity together with the fact in~\ref{exponentiationOfePart} tell us that $\lim_{x \rightarrow \infty}\left(1 - \frac{1}{x}\right)^x = e^{-1}$ so that $\exists \, N \in \mathbb{N}$ large enough such that $\forall \, n > N$ we have $$e^{-k} > \left(\left(1 - \frac{1}{n^{\alpha}}\right)^{n^{\alpha}}\right) > e^{-1}$$ so that the inequalities \ref{exponentiationOfeEq} and \ref{exponentiationOfeDualEq} follow at once. The inequality~\ref{basicInequalAboute} is very commonly used in practice and has a very short proof: consider the function $f(x) = e^{-x} - (1 - x)$. Then $f(1) = 0$ and $\forall \, x>0$ we have $f'(x)>0$ so that $f$ is strictly increasing on the interval $(0, \, \infty)$. In particular, $\forall \, x \in (0, \, \infty)$ we have $f(x) > f(1) = 0$ which is equivalent to the desired inequality in \ref{basicInequalAboute}.
\end{IEEEproof}
Apart from the elementary notions summarized above, basic probability theory and basic Markov chain theory are heavily exploited in the upcoming section.
There is a plenty of literature covering both subjects in much detail: see, for instance, \cite{GrimmetStirzaker}.

\section{A Parameterized Family of Hybrid $2+2 EAs$ that Provides a Polynomial Time Approximation Scheme for Single-Machine Scheduling Problem}\label{technicalPartSect}
The ideas in this section are closely related to these presented in~\cite{LeslieHall}. Suppose we have a sequence of $n$ jobs $\{J_i\}_{i=1}^n$ where each job $J_i$ must be processed without interruption for a time $p_i > 0$ on the same machine $M$. A job $J_i$ is released at time $r_i \geq 0$ associated with it and it becomes available for processing only at the time $r_i$ and any time after as long as the machine $M$ is not occupied at the time being. As soon as a job $J_i$ has finished processing it is sent for delivery immediately. A specific job $J_i$ has its own delivery time $q_i$. Here we assume that there is no restriction on the total number of jobs being delivered simultaneously. Our objective is to find a reordering of the sequence $\{J_i\}_{i=1}^n$ of jobs which minimizes the minimal time when all of the jobs have just been delivered, referred to as the maximal lateness of the schedule.\footnote{It has been explicitly shown in~\cite{LeslieHall} that the setting above is equivalent to the model with due dates in place of the delivery times via a simple linear change of variables, yet the model with delivery times is a lot better suitable for the algorithm design and analysis.} In summary, given a set of $n$ ordered triplets $\{J_i\}_{i=1}^n$ with each $J_i = (r_i, \, p_i, \, q_i)$ that stand for \emph{release time}, \emph{processing time} and \emph{delivery time} of the job $J_i$ respectively, our aim is the following: given $\epsilon > 0$, produce a $\mu+\mu$ hybrid EA which produces a schedule (i.e. a feasible\footnote{In the special case of single machine scheduling with no preprocessing constraints studied in the current article, as we will see, every permutation of the sequence $\{J_i\}_{i=1}^n$ of jobs produces a feasible schedule.} reordering/permutation $\pi_t$ of the sequence $\{J_i\}_{i=1}^n$ in time $t$ depending polynomially on $n$ and $\frac{1}{\epsilon}$ which, in turn, determines the new sequence $(J_{\pi_t(i)})_{i=1}^n$) the maximal lateness of which, call it $L_t$, is such that
\begin{equation}\label{approxRatioGoalEq}
L_t \leq (1 + \epsilon) \cdot L^*
\end{equation}
where $L^*$ denotes the optimal (i.e. the minimal possibly achievable) maximal lateness. We will say that such a schedule $L_t$ is $\epsilon$-\emph{optimal}. Let $P = \sum_{i=1}^n p_i$ and $\delta = \epsilon \cdot P \leq \epsilon \cdot L^*$ so that we can write $(1 + \epsilon) \cdot L^* \geq L^* + \delta$. Thereby to deduce inequality~\ref{approxRatioGoalEq} it is sufficient to establish the inequality~\ref{approxRatioGoalAlternEq} below which is expressed purely in terms of $\delta$ and $n$:
\begin{equation}\label{approxRatioGoalAlternEq}
L_t \leq L^* +  \delta.
\end{equation}
This form is more convenient for the algorithm description and analysis. We now proceed to describe the search space in detail.

The search space consists of all possible permutations of the sequence $\{J_i\}_{i=1}^n$ of jobs.
We will denote the search space with
\begin{equation}\label{searchSpDefnEq}
\Omega = \{\pi \, | \, \pi \text{ is a permutation on }\{1, \, 2, \ldots, n\}\}.
\end{equation}
where $\pi \in \Omega$ represents the schedule $(J_{\pi(i)})_{i=1}^n$.

For theoretical convenience we introduce an extra job $J_0 = (0, \, 0, \, 0)$ into the schedule that will make no practical difference at all.
\begin{defn}\label{theoretConvenienceDef}
Extend the initial sequence $\{J_i\}_{i=1}^n$ of jobs to include the job $J_0$ having the property that $r_0 = p_0 = q_0 = 0$ (i.e. the new initial schedule is now $\{J_i\}_{i=0}^n$ where $J_0 = (0, \, 0, \, 0)$). We extend every permutation $\pi$ on $\{i \, | \, 1 \leq i \leq n\}$ to the permutation $\pi$ on $\{i \, | \, 0 \leq i \leq n\}$ trivially by letting $\pi(0) = 0$ (in fact, this is obviously the unique extension).
\end{defn}
Our hybrid $2+2$ EA will ensure that we deliver the $\epsilon$-optimal schedule within time polynomial in $n$ depending on $\delta$. In order to construct and to analyze (as well as to motivate the construction of) the $2+2$ EA it is convenient to introduce the following recursive definitions:
\begin{defn}\label{startingTimeDef}
We will write $s_i(\pi)$ to denote the \emph{starting time} of the job $J_{\pi(i)}$ within the schedule $\pi$. Let $s_0(\pi) = 0$. For $i \geq 1$ let $s_i(\pi) = \max \{s_{i-1}(\pi) + p_{\pi(i-1)}, \, r_{\pi(i)}\}$. Let $L_{\pi} = \max_{1 \leq i \leq n} s_i(\pi) + p_{\pi(i)} + q_{\pi(i)}$ denote the \textit{maximal lateness} (i.e. the earliest time when all the jobs have just been delivered) of the schedule $\pi$.

When the schedule $\pi$ is clear from the context we will write $s_i$ in place of $s_i(\pi)$.
\end{defn}
Definition~\ref{startingTimeDef} makes perfect sense since the jobs can be started only after they have been released and there is no point to wait for a job to start unless the machine is occupied. It is worth pointing out the following.
\begin{rem}\label{RecursiveCasesRem}
$\forall \, i$ with $1 \leq i \leq k$ there are three possible mutually exclusive cases.

\textbf{Case 1:} $s_{i-1} + p_{\pi(i-1)} < r_{\pi(i)}$. In this case $$s_i = r_{\pi(i)} > s_{i-1} + p_{\pi(i-1)}.$$

\textbf{Case 2:} $s_{i-1} + p_{\pi(i-1)} > r_{\pi(i)}$. In this case $$s_i = s_{i-1} + p_{\pi(i-1)} > r_{\pi(i)}.$$

\textbf{Case 3:} $s_{i-1} + p_{\pi(i-1)} = r_{\pi(i)}$. In this case $$s_i = s_{i-1} + p_{\pi(i-1)} = r_{\pi(i)}.$$
\end{rem}
\begin{defn}\label{PartlyJacksonDynamic}
Given $\delta > 0$, let $A_{\delta} = \{i \, | \, p_i < \delta\}$ and $B_{\delta} = \{i \, | \, p_i \geq \delta\}$ denote the indices of these jobs in the initial sequence that have processing times shorter then $\delta$ and at least as long as $\delta$ respectively. Given a subset of indices $I = \{i_1, \, i_2, \ldots, i_b\} \subseteq \{1, \, 2, \, 3, \ldots, n\}$ with $|I| = |B_{\delta}| = b$ and a bijection $\phi: I \rightarrow B_{\delta}$ we will say that a schedule (a permutation determining the schedule) $\pi$ is $(k, \, \delta, \, \phi)$-Jackson if $\forall \, j$ with $1 \leq j \leq k$ the following are true:

$\;$

If $j = i_q \in I$ then $\pi(j) = \phi(i_q)$.

If $j \notin I$ then exactly one of the three cases above takes place with $j$ playing the role of $i$.

$ \; $ If case 1 takes place then we require that $\forall \, l>i$ we have $r_{\pi(l)} \geq r_{\pi(j)}$ and, in case if $r_{\pi(j)} = r_{\pi(l)}$ we must have $q_{\pi(l)} \leq q_{\pi(j)}$.

$ \; $ If cases 2 or 3 take place then we require that $\forall \, l>j$ we have either $r_{\pi(l)} > s_{j}(\pi)$ or ($r_{\pi(l)} \leq s_{j}(\pi)$ and $q_{\pi(l)} \leq q_{\pi(j)}$).

$\,$

An $(n, \, \delta, \, \phi)$-Jackson schedule will be called simply a $(\delta, \, \phi)$-Jackson schedule.
\end{defn}
\begin{rem}\label{partitionIntoPhiJacksonRem}
Notice that given any schedule $\pi$ $\exists !$ $\phi$ such that $\pi$ is a $(k, \, \delta, \, \phi)$-Jackson schedule. Indeed, $\phi$ is obtained by identifying the locations of the jobs in $B_{\delta}$ within the schedule $\pi$. Setting $k=0$ will always do, but one can agree to select the maximal $k$ such that $\pi$ is a $(k, \, \delta, \, \phi)$-Jackson schedule and such $k$ is evidently unique. From now on whenever we refer to $(k, \, \delta, \, \phi)$-Jackson schedule we will have in mind that $k$ is maximal with respect to this property.
\end{rem}
In view of remark~\ref{partitionIntoPhiJacksonRem} we have an important equivalence relation on the search space of all the schedules $\Omega$ (see \ref{searchSpDefnEq}).
\begin{defn}\label{JacksonPartitionDefn}
We say that $\pi \sim \sigma$ if both, $\pi$ and $\sigma$, are $(k_1, \, \delta, \, \phi)$-Jackson and $(k_2, \, \delta, \, \phi)$-Jackson respectively with the common unique $\phi$. Let $$\mathcal{X}(\phi) = \{\pi \, | \, \pi \text{ is }(k, \, \delta, \, \phi) \text{-Jackson where } 0 \leq k \leq n\}$$ denote the equivalence class of schedules corresponding to the given $\phi$.
\end{defn}
A crucial fact behind the design and success of our algorithm is the following clever and elegant theorem which is proved without being explicitly stated in section 1.2.3, chapter 1 of \cite{LeslieHall}.
\begin{thm}\label{mainThmForDesign}
Given any initial sequence $\{J_i\}_{i=1}^n$ of jobs with each $J_i = (r_i, \, p_i, \, q_i)$, $\forall \, \delta>0$ $\exists$ an indexing set $I \subseteq \{1, \, 2, \ldots n\}$ with $I = |B_{\delta}|$ and a bijection $\phi: I \rightarrow B_{\delta}$ such that any $(\delta, \, \phi)$-Jackson schedule is $\epsilon$-optimal.
\end{thm}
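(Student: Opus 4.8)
The plan is to read off the required indexing set $I$ and bijection $\phi$ from a fixed \emph{optimal} schedule and then to establish the sharper bound~\ref{approxRatioGoalAlternEq}, namely $L_\pi \le L^* + \delta$, for every $(\delta,\phi)$-Jackson schedule $\pi$; by the reduction recorded just before~\ref{approxRatioGoalAlternEq} this already yields the $\epsilon$-optimality in~\ref{approxRatioGoalEq}. Concretely, I would fix an optimal schedule $\pi^*$ (so that its maximal lateness equals $L^*$), set $I = \{\, i \,|\, \pi^*(i) \in B_{\delta} \,\}$ to be the set of slots that the long jobs occupy in $\pi^*$, and define $\phi : I \to B_{\delta}$ by $\phi(i) = \pi^*(i)$. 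A $(\delta,\phi)$-Jackson schedule is then exactly one that pins every long job to the slot it held in $\pi^*$ while filling the remaining slots with the short jobs of $A_{\delta}$ according to the greedy rule encoded by the Case~1 and Case~2,3 conditions of Definition~\ref{PartlyJacksonDynamic}.

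For a fixed such $\pi$ I would isolate the quantities controlling $L_\pi$. Let $c$ be a \emph{critical position}, attaining $L_\pi = s_c + p_{\pi(c)} + q_{\pi(c)}$, and let $a \le c$ be the start of the \emph{critical block}: the largest index $\le c$ at which Case~1 of Remark~\ref{RecursiveCasesRem} occurs, so the machine runs without interruption from $s_a = r_{\pi(a)}$ until $J_{\pi(c)}$ completes. Telescoping the recursion for $s_i$ in Definition~\ref{startingTimeDef} gives
\begin{equation*}
L_\pi = r_{\pi(a)} + \sum_{j=a}^{c} p_{\pi(j)} + q_{\pi(c)},
\end{equation*}
and the Case~1 clause at $a$ forces $r_{\pi(j)} \ge r_{\pi(a)}$ for every $j$ in the block. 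The one lower bound I would invoke repeatedly is that, for any set $S$ of jobs, every schedule has maximal lateness at least $\min_{i\in S} r_i + \sum_{i\in S} p_i + \min_{i\in S} q_i$, since the jobs of $S$ cannot begin before the earliest release, must occupy the machine for their total processing time, and the last to finish still owes its delivery.

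The core is a dichotomy on the delivery times within the block. If every block job has $q_{\pi(j)} \ge q_{\pi(c)}$, then applying the lower bound to $S = \{\pi(a),\dots,\pi(c)\}$ gives $L^* \ge r_{\pi(a)} + \sum_{j=a}^{c} p_{\pi(j)} + q_{\pi(c)} = L_\pi$, finishing with room to spare. Otherwise let $b$ be the \emph{last} block position with $q_{\pi(b)} < q_{\pi(c)}$, so that $\pi(b+1),\dots,\pi(c)$ all carry delivery time $\ge q_{\pi(c)}$. When $b \notin I$, the Case~2,3 clause at the short-job slot $b$ applies: each of these high-delivery successors has delivery time exceeding $q_{\pi(b)}$, so it must have been released strictly after $s_b$ (the boundary case $b=a$ is handled identically via the Case~1 tie-break). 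Feeding $S = \{\pi(b+1),\dots,\pi(c)\}$ into the lower bound and rewriting $r_{\pi(a)} + \sum_{j=a}^{b} p_{\pi(j)} = s_b + p_{\pi(b)}$ then yields $L_\pi \le L^* + p_{\pi(b)} < L^* + \delta$, since $\pi(b) \in A_{\delta}$ satisfies $p_{\pi(b)} < \delta$.

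The step I expect to be the main obstacle is ruling out, or absorbing, the possibility that this terminal low-delivery job $\pi(b)$ is a \emph{long} job ($b \in I$): then $p_{\pi(b)}$ need not be below $\delta$ and the naive estimate is useless. This is precisely where the choice of $\phi$ from $\pi^*$ must be exploited, for $\pi$ and $\pi^*$ place the identical long job at position $b$ and at every slot of $I$, so any delay the pinned job $\pi(b)$ imposes on its high-delivery successors is a delay the optimum also pays. I would split on whether the earliest release among $\pi(b+1),\dots,\pi(c)$ exceeds the completion time $s_b + p_{\pi(b)}$ of $\pi(b)$: if it does, the lower bound applied to those successors already gives $L_\pi \le L^*$; if it does not, I would compare the block structures of $\pi$ and $\pi^*$ directly, using their agreement on all long jobs, to re-derive a matching lower bound and thereby shift the interference back onto a genuinely short job with processing time below $\delta$. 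Making this comparison rigorous, rather than the routine bookkeeping of the short-interference case, is where the real content of the theorem lies.
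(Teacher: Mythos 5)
A preliminary remark on the comparison: the paper does not actually prove Theorem~\ref{mainThmForDesign} at all --- it imports it from section 1.2.3 of \cite{LeslieHall}, where it is ``proved without being explicitly stated.'' So your proposal must be measured against the Hall--Shmoys critical-sequence argument that the citation points to. Your skeleton matches it correctly in the parts you carry out: reading $I$ and $\phi$ off an optimal schedule $\pi^*$, reducing $\epsilon$-optimality to $L_{\pi} \leq L^* + \delta$ via $\delta = \epsilon P \leq \epsilon L^*$, the telescoped expression for $L_{\pi}$ over the critical block, the lower bound $L^* \geq \min_{i \in S} r_i + \sum_{i \in S} p_i + \min_{i \in S} q_i$, and the short-interference estimate $L_{\pi} < L^* + p_{\pi(b)} < L^* + \delta$ (including the $b=a$ tie-break) are all sound.

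The genuine gap is the one you flag yourself: the case $b \in I$ is not proved, only announced --- and that case \emph{is} the theorem. Everything you do establish (critical block plus short interference) is just the classical analysis showing that Jackson's rule is within $\max_j p_j$ of optimal, which holds for \emph{every} choice of $\phi$; the sole purpose of choosing $\phi$ from $\pi^*$ is to neutralize long-job interference, and that is exactly the step left open. Your dichotomy disposes only of the trivial subcase (all successors released after $s_b + p_{\pi(b)}$); in the complementary subcase, ``comparing the block structures of $\pi$ and $\pi^*$'' is obstructed by the fact that pinning fixes only the \emph{slots} of the long jobs, not the identities of the short jobs filling the non-$I$ slots, so the workload in positions $(b, c]$ and even the start time $s_b$ of the pinned job need not agree with their counterparts in $\pi^*$, and no transfer of the delay onto $L^*$ is immediate. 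Worse, the same missing machinery already infects your ``easy'' branch: the claim $r_{\pi(j)} \geq r_{\pi(a)}$ for all block jobs invokes the Case~1 clause of Definition~\ref{PartlyJacksonDynamic} at position $a$, but that clause is imposed only when $a \notin I$. If the critical block opens with a pinned long job preceded by idle time, later block jobs may perfectly well have releases earlier than $r_{\pi(a)}$ (their slots being reserved), and the bound $L^* \geq r_{\pi(a)} + \sum_{j=a}^{c} p_{\pi(j)} + q_{\pi(c)}$ fails, so even the all-$q \geq q_{\pi(c)}$ case is incomplete. The standard repair, due to Hall and Shmoys and implicit in \cite{LeslieHall}, is to argue on a modified instance in which each long job's release (and delivery) time is raised to reflect its placement in $\pi^*$, whence with the correct guess a long interference job delays the greedy schedule by no more than it delays $\pi^*$ itself; making your sketch rigorous amounts to rederiving that lemma, and until that is done the proposal does not establish the theorem.
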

Our $2+2$-EA always keeps the fittest individual as the second individual in the population. This individual is never mutated, while the first individual is mutated at every time step. The mutant always replaces the first individual and if the mutant is fitter than the second individual it replaces the second individual as well. There will be two mutation operators involved in the algorithm: one local, driving towards achieving $(\delta, \, \phi)$-Jackson schedule, and another global: searching for a specific $\phi$ which is guaranteed to exist in theorem~\ref{mainThmForDesign}. The local mutation operator is applied with relatively large probability while the global mutation with small probability so as to ensure that the algorithm has time to find a $(\delta, \, \phi)$-Jackson schedule before it explores a different choice of the bijection $\phi$. Notice that there are $\prod_{i=0}^{|B_{\delta}|-1}(n-i) < n^{|B_{\delta}|}$ of such bijections while $$|B_{\delta}| \cdot \delta \leq \sum_{i \in B_{\delta}}p_i \leq \sum_{i=1}^n p_i = P$$ so that $|B_{\delta}| \leq \frac{P}{\delta} = \frac{1}{\epsilon}$ (see inequalities~\ref{approxRatioGoalEq} and \ref{approxRatioGoalAlternEq} and discussions preceding these inequalities) and this immediately implies the following important lemma which appears implicitly in section 1.2.3 of \cite{LeslieHall}.
\begin{lem}\label{totalNumbOfPositionsLem}
The total number of ways to position the jobs from the set $B_{\delta}$ within a schedule, (in other words, the total number of ways to select a bijection $\phi$) is bounded above by $n^{\frac{1}{\epsilon}}$.
\end{lem}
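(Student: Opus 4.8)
The plan is to reduce the statement to a direct combinatorial count and then apply the crude bound on $|B_{\delta}|$ afforded by the definition of $\delta$. First I would observe that, with $b := |B_{\delta}|$, specifying a bijection $\phi$ together with its domain $I$ amounts to specifying, for each of the $b$ jobs in $B_{\delta}$, a distinct target position among $\{1, 2, \ldots, n\}$; that is, the data $(I, \phi)$ is precisely an injection of $B_{\delta}$ into $\{1, \ldots, n\}$. The number of such injections is the falling factorial $\prod_{i=0}^{b-1}(n - i) = n(n-1)\cdots(n-b+1)$, obtained by assigning the image of the first job in $n$ ways, of the second in $n-1$ ways, and so on, which matches the expression $\prod_{i=0}^{|B_{\delta}|-1}(n-i)$ appearing in the text.

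Next I would bound this product from above. Since every factor $n - i$ with $0 \leq i \leq b-1$ satisfies $n - i \leq n$, the product is at most $n^{b}$, i.e.
\begin{equation}
\prod_{i=0}^{b-1}(n-i) \leq n^{|B_{\delta}|}.
\end{equation}
The crux is then the bound $|B_{\delta}| \leq \frac{1}{\epsilon}$, which follows entirely from the definitions: each index $i \in B_{\delta}$ has $p_i \geq \delta$, so summing gives $|B_{\delta}| \cdot \delta \leq \sum_{i \in B_{\delta}} p_i \leq \sum_{i=1}^{n} p_i = P$, whence $|B_{\delta}| \leq P/\delta$, and recalling $\delta = \epsilon \cdot P$ this is exactly $|B_{\delta}| \leq 1/\epsilon$. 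Finally, since $n \geq 1$ the map $x \mapsto n^{x}$ is non-decreasing, so $n^{|B_{\delta}|} \leq n^{1/\epsilon}$, and chaining the inequalities yields the claimed bound.

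I do not anticipate a genuine obstacle here, since the argument is an elementary count followed by two monotonicity estimates. The only point demanding care is the translation between the definition of a $(\delta, \phi)$-Jackson schedule — where $\phi$ ranges over bijections $I \to B_{\delta}$ with the domain $I$ itself varying over all $b$-subsets of $\{1, \ldots, n\}$ — and the clean enumeration of injections $B_{\delta} \hookrightarrow \{1, \ldots, n\}$; one must confirm that specifying such an injection simultaneously and uniquely encodes both the choice of $I$ and of $\phi$, so that no double counting of the domain occurs. Once this identification is made, the falling-factorial count is exact and the remaining estimates are immediate.
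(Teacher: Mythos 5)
Your proof is correct and follows essentially the same route as the paper, which establishes the lemma via the identical chain of observations immediately before its statement: the count of bijections is $\prod_{i=0}^{|B_{\delta}|-1}(n-i) < n^{|B_{\delta}|}$, and $|B_{\delta}| \cdot \delta \leq \sum_{i \in B_{\delta}} p_i \leq P$ together with $\delta = \epsilon \cdot P$ gives $|B_{\delta}| \leq \frac{1}{\epsilon}$. Your added care in identifying a pair $(I, \phi)$ with an injection $B_{\delta} \hookrightarrow \{1, \ldots, n\}$ is a worthwhile clarification but does not change the argument.
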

Lemma~\ref{totalNumbOfPositionsLem} tells us that the ``quotient search space" on which the global mutations act is polynomial in size and this fact will be important for the design and runtime complexity analysis of our $2+2$-EA. Another important fact contributing to the design and complexity analysis of our algorithm is the following.
\begin{prop}\label{distinctDeliveryTimesProp}
Suppose the initial sequence of jobs $\{J_i\}_{i=1}^n$ has the property that all the jobs have pairwise distinct delivery times: $\forall \, i$ and $j$ with $1 \leq i < j \leq n$ we have $q_i \neq q_j$. Choose any $k \geq 1$ and a bijection $\phi: I \rightarrow B_{\delta}$. Let $\pi$ and $\sigma$ denote any two $(k, \, \delta, \, \phi)$-Jackson schedules. Then $\forall \, i$ with $1 \leq i \leq k$ we have $s_i(\pi) = s_i(\sigma)$ and $\pi(i) = \sigma(i)$. In particular, a $(\delta, \, \phi)$-Jackson schedule is unique.
\end{prop}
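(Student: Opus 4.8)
The plan is to prove the statement by induction on the position $i$, establishing simultaneously that $\pi(i) = \sigma(i)$ and $s_i(\pi) = s_i(\sigma)$ for every $0 \leq i \leq k$. The base case $i = 0$ is immediate: by Definition~\ref{theoretConvenienceDef} we have $\pi(0) = \sigma(0) = 0$, and by Definition~\ref{startingTimeDef} we have $s_0(\pi) = s_0(\sigma) = 0$. For the inductive step I would assume that $\pi(i') = \sigma(i')$ and $s_{i'}(\pi) = s_{i'}(\sigma)$ hold for all $i' < i$ and deduce the same at $i$.

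I would first note that the equality of starting times follows almost for free once the equality of the permutation values is in hand. Indeed, by the recursion $s_i(\pi) = \max\{s_{i-1}(\pi) + p_{\pi(i-1)},\, r_{\pi(i)}\}$, the quantity $s_i(\pi)$ depends only on $s_{i-1}(\pi)$, on $p_{\pi(i-1)}$, and on $r_{\pi(i)}$. The induction hypothesis supplies $s_{i-1}(\pi) = s_{i-1}(\sigma)$ and $\pi(i-1) = \sigma(i-1)$ (whence $p_{\pi(i-1)} = p_{\sigma(i-1)}$), so as soon as $\pi(i) = \sigma(i)$ is known we also have $r_{\pi(i)} = r_{\sigma(i)}$ and therefore $s_i(\pi) = s_i(\sigma)$. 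Thus the whole problem reduces to showing $\pi(i) = \sigma(i)$. When $i \in I$ this is immediate from Definition~\ref{PartlyJacksonDynamic}, since $\pi(i) = \phi(i) = \sigma(i)$.

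The heart of the argument is the case $i \notin I$. Here I would introduce the common machine-free time $\tau := s_{i-1}(\pi) + p_{\pi(i-1)} = s_{i-1}(\sigma) + p_{\sigma(i-1)}$ (equal by the induction hypothesis) together with the common set $R$ of jobs occupying positions $i, i+1, \ldots, n$; this set is identical for $\pi$ and $\sigma$ because their first $i-1$ positions carry identical jobs. Both $\pi$ and $\sigma$ satisfy the Jackson conditions of Definition~\ref{PartlyJacksonDynamic} at position $i$, and I would use these to pin down $\pi(i)$ and $\sigma(i)$ as one and the same element of $R$. The crucial observation is that the applicable alternative of Remark~\ref{RecursiveCasesRem} is dictated by $R$ and $\tau$ alone: Case~1 holds exactly when every job of $R$ has release time strictly greater than $\tau$, because the Case~1 constraint forces $r_{\pi(l)} \geq r_{\pi(i)} > \tau$ for all later $l$, while conversely all release times in $R$ exceeding $\tau$ forces $\tau < r_{\pi(i)}$. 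Hence $\pi$ and $\sigma$ realise the same alternative. In Cases~2 or 3 we have $s_i = \tau$ and the inequality $q_{\pi(l)} \leq q_{\pi(i)}$ for every later job that is already released by $\tau$ identifies $\pi(i)$ as the job of $R$ with release time at most $\tau$ having the largest delivery time; in Case~1 the constraints identify $\pi(i)$ as the job of $R$ with the smallest release time, with ties broken by the largest delivery time. In either situation the hypothesis that all delivery times are pairwise distinct makes the relevant maximiser unique, so $\pi(i)$ and $\sigma(i)$ must agree, completing the induction.

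I expect the main obstacle to be precisely this case-matching step: a priori the trichotomy of Remark~\ref{RecursiveCasesRem} at position $i$ depends on the job placed there, which is exactly what we are trying to determine, so one cannot assume from the start that $\pi$ and $\sigma$ fall into the same case. The resolution is the observation above that membership in Case~1 is equivalent to a property of the common data $(R, \tau)$, after which distinctness of delivery times yields uniqueness of the $q$-maximiser. The concluding sentence of the proposition is then the specialisation $k = n$: applied to two $(\delta, \phi)$-Jackson schedules the result gives $\pi(i) = \sigma(i)$ for every $1 \leq i \leq n$, and together with $\pi(0) = \sigma(0) = 0$ this forces $\pi = \sigma$.
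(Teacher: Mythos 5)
Your proof is correct, and its skeleton coincides with the paper's: both isolate the first position at which the two schedules could diverge and use the pairwise distinct delivery times to rule divergence out; your induction is just the contrapositive of the paper's least-natural-number argument, and your reduction of $s_i(\pi)=s_i(\sigma)$ to $\pi(i)=\sigma(i)$ via the recursion in Definition~\ref{startingTimeDef} matches the paper's closing step. Where you genuinely depart is in how the trichotomy of Remark~\ref{RecursiveCasesRem} is handled. The paper compares the two schedules directly: assuming $\pi(j)\neq\sigma(j)$ at the minimal bad position, it applies each schedule's Jackson condition from Definition~\ref{PartlyJacksonDynamic} to the \emph{other} schedule's job (which necessarily occurs at a later position) to force $q_{\pi(j)}=q_{\sigma(j)}$, and it dismisses the mixed-case possibilities with a terse ``impossible subcase'' plus a without-loss-of-generality duality. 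You instead make the continuation intrinsic: with $\tau=s_{i-1}+p_{\pi(i-1)}$ and $R$ the common set of jobs not yet scheduled, you observe that Case~1 holds precisely when $\min_{j\in R} r_j>\tau$ --- so the two schedules automatically fall into the same alternative --- and that the job scheduled at position $i$ is then the unique extremizer of data depending only on $(R,\tau)$ (smallest release time with largest-delivery-time tie-break in Case~1; largest delivery time among jobs released by $\tau$ in Cases~2/3), uniqueness being exactly where genericity enters. The two arguments rest on identical facts, but your formulation buys something the paper leaves implicit: it exhibits the $(k,\delta,\phi)$-Jackson continuation as a deterministic function of the common prefix, which is precisely the ``unique successor'' property the paper later invokes in Lemma~\ref{gradualModificationProp} and Lemma~\ref{localMutationComplexityLem}, and it makes fully explicit the case-alignment step that you rightly identify as the one thin point in the paper's own write-up.
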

\begin{IEEEproof}
We argue by the least natural number principle. Suppose proposition~\ref{distinctDeliveryTimesProp} is not true. In this case $\exists$ a minimal $j \in \{i \, | \, 1 \leq i \leq k\}$ such that either one of the assertion of proposition~\ref{distinctDeliveryTimesProp} fails. Definition~\ref{theoretConvenienceDef} tells us that $\pi(0) = \sigma(0) = 0$ so that $j > 0$ trivially. Observe also that according to definition~\ref{PartlyJacksonDynamic} $j \notin I$. Consider now the jobs $J_{\pi(j)}$ and $J_{\sigma(j)}$ where $j \notin I \cup \{0\}$. By minimality of $j$ $\forall \, i$ with $0 \leq i \leq j$ we have $\pi(i) = \sigma(i)$, $r_{\pi(i)} = r_{\sigma(i)}$, $p_{\pi(i)} = p_{\sigma(i)}$ and $s_i(\pi) = s_i(\sigma)$. First we observe that $r_{\pi(j)} \neq r_{\sigma(j)}$ (otherwise by definition~\ref{PartlyJacksonDynamic} we must have $q_{\pi(j)} = q_{\sigma(j)}$ contrary to the assumption). Next we note that one of the three cases described in remark~\ref{RecursiveCasesRem} must take place for $\pi$.

\textbf{case 1:} In this case, notice that the subcase $r_{\pi(j)} < r_{\sigma(j)}$ is impossible by definition~\ref{PartlyJacksonDynamic}. The subcase $r_{\sigma(j)} < r_{\pi(j)}$ is dual to the previous subcase when exchanging the roles of $\pi$ and $\sigma$.

\textbf{cases 2 or 3:} Without loss of generality assume that case 2 or case 3 takes place for the schedule $\sigma$ as well (otherwise apply the argument in case 1 swapping the roles of $\pi$ and $\sigma$). Now that $r_{\pi(j)} \leq s_j$ and $r_{\sigma(j)} \leq s_j$, by definition~\ref{PartlyJacksonDynamic} we must have $q_{\pi(j)} = q_{\sigma(j)}$ contrary to the assumption once again.

Thus the minimal $j \in \{i \, | \, 1 \leq i \leq k\}$ such that at least one of the assertions in proposition~\ref{distinctDeliveryTimesProp} fails has the property that $\pi(j) = \sigma(j)$. The minimality of $j$ together with definition~\ref{startingTimeDef} immediately entail the remaining property in proposition~\ref{distinctDeliveryTimesProp} that $s_i(\pi) = s_i(\sigma)$ thereby contradicting the defining property of $j$ and the desired conclusion now follows by the least natural number principle.
\end{IEEEproof}
In view of proposition~\ref{distinctDeliveryTimesProp} it is convenient to introduce the following definition:
\begin{defn}\label{genericScheduleDefn}
We say that an instance of a scheduling problem $\{J_i\}_{i=1}^n$ is \emph{generic} if $\forall \, i$ and $j$ with $1 \leq i < j \leq n$ we have $q_i \neq q_j$.
\end{defn}
While the condition of being ``generic" ensures the conclusion of proposition~\ref{distinctDeliveryTimesProp}, it is hardly a restriction for two reasons. The first reason is that tiny perturbations of delivery times will cause a very small change in the maximum lateness of the schedule and that this change can be made arbitrarily small depending on the size of the perturbation. The second reason is that one can simply introduce additional total orders on the equivalence classes of jobs having equal delivery times. Now if there is an ambiguity whether to schedule a job $J_i$ or $J_j$ in a specific position, it must be the case that $q_i = q_j$ (as we have seen from the proof of proposition~\ref{distinctDeliveryTimesProp}) so that the additional total order on the equivalence class $[J_i] = [J_j]$ will determine the job to be scheduled uniquely. Apparently any permutation of generic schedule remains generic. In view of the comments above we will assume from now on that the instance of our scheduling problem is a generic schedule.

We introduce two mutation operators, one ``local" and one ``global".

\textbf{Local Mutation Operator $\mu^{\text{local}}_{\delta}$: } Given a schedule $\pi$ select indices $i$ and $j$ where $1 \leq \pi(i) < \pi(j) \leq n$ uniformly at random.

$\;$

\textbf{IF} $p_{\pi(i)} < \delta$ and $p_{\pi(j)} < \delta$ (i.e. if $\pi(j)$ and $\pi(i) \in A_{\delta}$ which is the same as saying that $\pi(i) \notin I$ and $\pi(j) \notin I$) \textbf{THEN} there are two cases \\
$\; \; \; \;$ \textit{Case 1:} $r_{\pi(i)} > s_i$. In this case there are further three subcases: \\
$\; \; \; \; \; \;$ \textit{Subcase 1:} $r_{\pi(i)} = r_{\pi(j)}$. In this subcase, \textbf{IF} $q_{\pi(j)} > q_{\pi(i)}$ \textbf{THEN} set $\pi := \pi \circ (i, \, j)$ (i.e. swap jobs in position $i$ and $j$ within the schedule $\pi$) \textbf{ELSE} set $\pi := \pi$ (i.e. do nothing).\\
\textit{Subcase 2:} $r_{\pi(i)} > r_{\pi(j)}$ In this subcase set $\pi := \pi \circ (i, \, j)$ (i.e. perform the swap of jobs in positions $i$ and $j$ within the schedule $\pi$).\\
\textit{Subcase 3:} $r_{\pi(i)} < r_{\pi(j)}$. In this subcase set $\pi := \pi$ (i.e. do nothing)\\
\textit{Case 2:} $r_{\pi(i)} \leq s_i$. In this case \textbf{IF} $r_{\pi(j)} \leq s_i$ and $q_{\pi(j)} > q_{\pi(i)}$ \textbf{THEN} set $\pi := \pi \circ (i, \, j)$ \textbf{ELSE} set $\pi := \pi$ (i.e. do nothing).\\
\textbf{ELSE} set $\pi := \pi$ (i.e. do nothing).\\
$\;$

Essentially, the local mutation is designed to modify the schedule to become $(\delta, \, \phi)$-Jackson step by step as described precisely below.
\begin{lem}\label{gradualModificationProp}
Let $\pi$ be a generic schedule (see definition~\ref{genericScheduleDefn} and the comments that follow) and let $k$ be the maximal integer such that $\pi$ is $(k, \, \delta, \, \phi)$-Jackson. (Notice that such $k$ always exists since every schedule is $(0, \, \delta, \, \phi)$-Jackson.) After an application of the local mutation operator $\mu^{\text{local}}_{\delta}$ if $k < n$ then with probability bigger than $\frac{1}{n^2}$ the schedule $\pi$ is updated to become $(k+1, \, \delta, \, \phi)$-Jackson and otherwise $k$ remains the maximal integer such that $\pi$ is $(k, \, \delta, \, \phi)$-Jackson. If, on the other hand, the schedule $\pi$ is already $(\delta, \, \phi)$-Jackson (i.e. $k=n$) then it remains such with probability $1$.
\end{lem}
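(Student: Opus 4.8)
The plan is to analyze one application of $\mu^{\text{local}}_{\delta}$ by locating the first position at which the Jackson property fails and showing that exactly one pair of positions repairs it. First I would record a reduction. Since the bijection $\phi$ is recovered from the positions of the long jobs in $\pi$ (Remark~\ref{partitionIntoPhiJacksonRem}), every index $j \in I$ automatically satisfies $\pi(j) = \phi(j)$, so the clause of Definition~\ref{PartlyJacksonDynamic} for $j \in I$ can never be the first to fail. Consequently, when $k < n$ the first violated position $k+1$ lies outside $I$, i.e. $J_{\pi(k+1)}$ is a short job; moreover the operator, which acts only when both selected jobs lie in $A_{\delta}$, keeps every job of $B_{\delta}$ frozen and hence leaves $\pi$ inside its class $\mathcal{X}(\phi)$.

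Next I would prove a monotonicity (non-destruction) statement: for any ordered pair of positions $i<j$ the operator may select, the result is still $(k,\delta,\phi)$-Jackson. The key sub-claim is that whenever position $i$ already satisfies its Jackson clause (in particular for every $i \le k$, and automatically whenever $J_{\pi(i)} \in B_{\delta}$) the operator does nothing: I would run through the operator's two branches, corresponding to the idle case (Remark~\ref{RecursiveCasesRem}, Case~1) and the busy case (Cases~2 or 3), and check that each swap condition directly contradicts the matching clause of Definition~\ref{PartlyJacksonDynamic}, so no swap fires. Any swap that \emph{is} performed therefore has $i \ge k+1$, hence moves only jobs sitting at positions $\ge k+1$ and leaves positions $1,\dots,k$ — together with their starting times $s_1,\dots,s_k$ of Definition~\ref{startingTimeDef} — untouched, so the $(k,\delta,\phi)$-Jackson property survives. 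Since the content of position $k+1$ can change only when $i=k+1$, this also yields the ``otherwise'' clause: any outcome failing to fully correct position $k+1$ leaves $k$ maximal, and the case $k=n$ is immediate because then every clause holds and the operator is inert with probability $1$.

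The heart of the argument is the advancement step. Writing $T = s_k + p_{\pi(k)}$ for the machine-free time after position $k$, I would identify the unique greedy-optimal job $J^\ast$ for position $k+1$ among the jobs still occupying positions $\ge k+1$: if some such job is released by $T$ then $J^\ast$ is the available one of largest delivery time, otherwise $J^\ast$ is the one of earliest release, ties broken by largest delivery time. Genericity (Definition~\ref{genericScheduleDefn}) makes $J^\ast$ unique, and as it is the greedy choice among the jobs at positions $\ge k+1$ while position $k+1$ is violated, $J^\ast$ occupies some position $l > k+1$. Splitting on the idle versus busy case at $k+1$, I would verify that the operator's swap condition fires precisely for the pair $(k+1,l)$ — in the idle case because $J^\ast$ has strictly smaller release time, or equal release and larger delivery time; in the busy case because $J^\ast$ is available at $T$ with larger delivery time — and that after the swap the set of jobs at positions $>k+1$ is exactly the old set minus $J^\ast$, so by the greedy-optimality of $J^\ast$ position $k+1$ satisfies its clause and $\pi$ becomes $(k+1,\delta,\phi)$-Jackson. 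As the operator draws uniformly among at most $\binom{n}{2}<n^2$ pairs, this single correcting pair is chosen with probability $1/\binom{n}{2} > \tfrac{1}{n^2}$.

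The step I expect to be the main obstacle is ensuring that the greedy-optimal front job $J^\ast$ is a \emph{short} job, so that the local operator — which never moves a job of $B_{\delta}$ — can actually retrieve it (and so that the swap preserves $\phi$). This is precisely where the placement encoded by $\phi$ must be compatible with the greedy fill of the short jobs. I would secure it in the situation relevant to the algorithm, namely when the class $\mathcal{X}(\phi)$ admits a $(\delta,\phi)$-Jackson schedule (whose optimal instance is furnished by Theorem~\ref{mainThmForDesign}): comparing $\pi$ with the unique such schedule $\sigma$ via Proposition~\ref{distinctDeliveryTimesProp} shows that $\pi$ and $\sigma$ agree on positions $1,\dots,k$ and on $T$, whence the common greedy choice at the front is the short job $J^\ast=\sigma(k+1)$, certifying both its shortness and that the corrected schedule is $(k+1,\delta,\phi)$-Jackson. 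The careful case-by-case confirmation that the swap rule fires for $(k+1,l)$ and that the starting-time recursion behaves as expected once $J^\ast$ replaces $J_{\pi(k+1)}$ is the most delicate part of the write-up.
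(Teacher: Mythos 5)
Your proposal is correct, and its skeleton is the same as the paper's: use Proposition~\ref{distinctDeliveryTimesProp} to pin down positions $1,\dots,k$, observe that any swap the operator actually performs touches only positions $\geq k+1$ (so the $(k,\delta,\phi)$-Jackson property is never destroyed and $k$ cannot decrease), identify a single correcting pair $(k+1,l)$, and count it among fewer than $n^2$ equiprobable selections. Where you genuinely depart from the paper is precisely the point you flag as the main obstacle: the \emph{existence} of the correcting job $J^{\ast}$ and the fact that it is short. The paper's proof asserts, citing Proposition~\ref{distinctDeliveryTimesProp}, that ``there exists a unique job $J_l$ to be swapped \dots for the schedule to become $(k+1,\delta,\phi)$-Jackson''; but that proposition is purely a uniqueness statement and supplies no existence. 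Your worry is well founded: since the clauses of Definition~\ref{PartlyJacksonDynamic} at a position $j\notin I$ quantify over \emph{all} later positions, including those in $I$, a long job parked late by $\phi$ can dominate every short candidate, and then no $(k+1,\delta,\phi)$-Jackson schedule exists in $\mathcal{X}(\phi)$ at all. Concretely, with $\delta=5$ take $J_1=(0,10,5)$, $J_2=(0,1,1)$, $J_3=(0,1,2)$, $I=\{3\}$, $\phi(3)=1$: every schedule in $\mathcal{X}(\phi)$ has a short job in position $1$, which is available at time $0$ while $J_1$ (released at $0$, delivery time $5$) sits later, so the busy-case clause fails at position $1$ for every element of $\mathcal{X}(\phi)$; the maximal $k$ is stuck at $0$ and the operator, which never moves a job of $B_{\delta}$, advances with probability $0$, not $>1/n^2$. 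So the lemma, read literally for an arbitrary $\phi$, is false, and the paper's proof glosses over exactly this.

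Your patch --- assuming $\mathcal{X}(\phi)$ contains a $(\delta,\phi)$-Jackson schedule $\sigma$ and comparing $\pi$ with $\sigma$ via Proposition~\ref{distinctDeliveryTimesProp} to read off $J^{\ast}=J_{\sigma(k+1)}$, which is automatically short because all of $B_{\delta}$ occupies the $I$-positions and $k+1\notin I$ --- is exactly the right hypothesis: if a full $(\delta,\phi)$-Jackson schedule exists it is in particular $(k+1,\delta,\phi)$-Jackson, so the advancement step is available at every level $k$, and this restricted form is all that is used downstream, since the $\phi^{\ast}$ of Theorem~\ref{mainThmForDesign} arises from Hall's greedy construction and does admit a $(\delta,\phi^{\ast})$-Jackson schedule. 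The one caveat is that with this hypothesis you have not proved the lemma as literally stated (nothing can), and the same restriction must then be propagated to Lemma~\ref{localMutationComplexityLem}, whose claim that $\mathcal{M}$ is absorbing with unique absorbing state $\widetilde{\pi}(\phi)$ fails for the bad $\phi$ above; it would strengthen your write-up to state this explicitly. Your remaining checks (the operator is inert at any position whose clause already holds, so in particular when $k=n$; a failed or off-front swap leaves $k$ maximal because the clauses at positions $\leq k+1$ depend only on the front job, the starting times, and the \emph{set} of later jobs; and the idle/busy case split showing the swap rule fires exactly for $(k+1,l)$) are all sound, and the probability bound $1/\binom{n}{2}>1/n^2$ matches the paper's up to its harmless miscount.
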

\begin{IEEEproof}
By proposition~\ref{distinctDeliveryTimesProp} the jobs in positions $1$ through $k$ in the schedule $\pi$ are uniquely determined and there exists a unique job $J_l$ to be swapped with the job in position $J_{\pi(k+1)}$ for the schedule to become $(k+1, \, \delta, \, \phi)$-Jackson unless $k=n$. Once again, by proposition~\ref{distinctDeliveryTimesProp}, the job $J_l$ must appear after the job $J_{\pi(k+1)}$ within the schedule $\pi$ i.e. $l=\pi(j)$ a unique $j > k+1$ and, therefore, in accordance with the way local mutation operator $\mu^{\text{local}}_{\delta}$ has been introduced above together with definition~\ref{PartlyJacksonDynamic}, it follows that the unique pair of positions to be selected for the successful swap is precisely $k+1$ and $j$ and this happens with probability $\frac{1}{{n \choose 2}/2} > \frac{1}{n^2}$. At the same time, once again from the construction of the local mutation operator $\mu^{\text{local}}_{\delta}$ and definition~\ref{PartlyJacksonDynamic}, it follows easily that the only transpositions (swaps) that can take place involve positions beyond $\pi(k)$ so that if the schedule does not become $(k+1, \, \delta, \, \phi)$-Jackson it remains $(k, \, \delta, \, \phi)$-Jackson at least.
\end{IEEEproof}
The following lemma gives us an important and simple polynomial time bound on the runtime complexity of the local mutation operator to reach a $(\delta, \, \phi)$-Jackson schedule provided no other mutation takes place.
\begin{lem}\label{localMutationComplexityLem}
Suppose a given initial sequence $\{J_i\}_{i=1}^n$ of jobs is generic (see definition~\ref{genericScheduleDefn}). Choose a subset $I \subseteq \{1, \, 2, \ldots, n\}$ of indices with $|I| = |B_{\delta}|$ and a bijection $\phi: I \rightarrow B_{\delta}$. Consider the Markov chain $\mathcal{M}$ on the state space $\mathcal{X}(\phi)$ (recall remark~\ref{partitionIntoPhiJacksonRem} and definition~\ref{JacksonPartitionDefn}) determined by the Markov transition matrix $\{p_{\pi \rightarrow \sigma}\}_{\pi \text{ and } \sigma \in \mathcal{X}(\phi)}$ where $p_{\pi \rightarrow \sigma}$ is the probability that the permutation (schedule) $\sigma$ is obtained from the schedule $\pi$ via an application of the local mutation operator as described above. Then $\mathcal{M}$ is an absorbing Markov chain with the unique absorbing state $\widetilde{\pi}(\phi)$ being the $(\delta, \, \phi)$-Jackson schedule. For a schedule $\pi \in \mathcal{X}(\phi)$ let $T_{\pi}$ denote the random variable measuring the time to absorbtion when the chain starts at the state $\pi$. Then $\forall \, \pi \in \mathcal{X}(\phi)$ we have $Pr(T_{\pi} > n^4) < ne^{-\Omega(n)}$.\footnote{As a matter of fact, thanks to theorem~\ref{propOfe}, namely, the property in \ref{exponentiationOfeEq}, the constant in the big $\Omega$ notation in the exponent can be made arbitrarily close to $1$ as long as it is smaller than 1.}
\end{lem}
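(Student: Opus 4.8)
The plan is to track the single monotone potential that governs the whole dynamics: the maximal integer $k$ for which the current schedule is $(k,\,\delta,\,\phi)$-Jackson. First I would assemble the structural facts already available. By Lemma~\ref{gradualModificationProp} this level never decreases under $\mu^{\text{local}}_{\delta}$, it strictly increases at each step with probability exceeding $\frac{1}{n^2}$ whenever $k<n$, and once $k=n$ the schedule is fixed with probability $1$. Since the instance is assumed generic, Proposition~\ref{distinctDeliveryTimesProp} guarantees that the $(\delta,\,\phi)$-Jackson schedule is the unique element of $\mathcal{X}(\phi)$ with $k=n$. Combining these observations, every state with $k<n$ admits a positive-probability transition to a state of strictly larger level and is therefore transient, while $\widetilde{\pi}(\phi)$ is fixed; this already shows that $\mathcal{M}$ is absorbing with the single absorbing state $\widetilde{\pi}(\phi)$, and it only remains to control the tail of the absorption time.

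For the quantitative estimate I would decompose the run into the time spent at each level. For $0 \le k < n$ let $\tau_k$ denote the number of steps during which the chain sits at level exactly $k$ before the level increases (with $\tau_k = 0$ if level $k$ is never occupied). By the Markov property together with Lemma~\ref{gradualModificationProp}, conditioned on the chain currently being at level $k<n$, each subsequent step raises the level with probability at least $\frac{1}{n^2}$ regardless of the past; hence $\tau_k$ is stochastically dominated by a geometric random variable of parameter $\frac{1}{n^2}$, and iterated conditioning yields
\[
Pr(\tau_k > n^3) \;\le\; \left(1 - \frac{1}{n^2}\right)^{n^3}.
\]
Now I would invoke Theorem~\ref{propOfe}, namely inequality~\ref{exponentiationOfeEq} with $\alpha = 2$ and $\beta = 3$ (so that $\beta > \alpha$ and $\beta - \alpha = 1$): for every constant $c < 1$ there is an $N$ such that $\left(1 - \frac{1}{n^2}\right)^{n^3} < e^{-c\,n}$ for all $n > N$. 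Consequently $Pr(\tau_k > n^3) < e^{-c\,n}$ for $n$ large, and this is exactly the source of the footnote's remark that the constant in the $\Omega$ may be taken arbitrarily close to $1$.

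Finally I would assemble the pieces by a pigeonhole-plus-union-bound argument. Because each successful step raises the level by at least one and the level ranges over $0, 1, \dots, n$, the chain can wait only at the levels $0, 1, \dots, n-1$ before absorption, so $T_{\pi} = \sum_{k=0}^{n-1}\tau_k$. If this sum exceeded $n^4$, then, since it has only $n$ summands, at least one $\tau_k$ would have to exceed $n^3$; therefore
\[
Pr(T_{\pi} > n^4) \;\le\; Pr\bigl(\exists\, k:\ \tau_k > n^3\bigr) \;\le\; \sum_{k=0}^{n-1} Pr(\tau_k > n^3) \;<\; n\,e^{-c\,n} \;=\; n\,e^{-\Omega(n)},
\]
which is the claimed bound. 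I expect the main obstacle to be the middle step: making the stochastic-domination claim fully rigorous, that is, justifying that conditioned on occupying level $k$ the per-step ``success'' events may be treated as independent Bernoulli trials of probability at least $\frac{1}{n^2}$ so that $\tau_k$ is genuinely dominated by a geometric variable. This is precisely where the Markov property and the exact ``with probability bigger than $\frac{1}{n^2}$'' statement of Lemma~\ref{gradualModificationProp} must be applied with care; note, however, that no independence \emph{across} different levels is required, since the concluding step is merely a union bound over the $n$ levels.
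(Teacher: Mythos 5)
Your proposal is correct and takes essentially the same approach as the paper: decompose the absorption time into the at most $n$ per-level waiting times, dominate each by a geometric random variable with success probability $\frac{1}{n^2}$, bound $\left(1-\frac{1}{n^2}\right)^{n^3}$ by $e^{-c n}$ via inequality~\ref{exponentiationOfeEq}, and conclude with a union bound over the $n$ levels. If anything, your phrasing in terms of the monotone level $k$ is slightly more careful than the paper's, which describes the decomposition as waiting times along a unique path of states even though swaps beyond position $k+1$ can alter the schedule within a level; your version needs only that the level is nondecreasing and increases with probability at least $\frac{1}{n^2}$ at each step, which Lemma~\ref{gradualModificationProp} supplies.
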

\begin{IEEEproof}
Notice that $\mathcal{X}(\phi) = \bigcup_{i=0}^n \mathcal{X}(\phi)_k$ where $$\mathcal{X}(\phi)_k = \{\pi \, | \, \pi \text{ is }(k, \, \delta, \, \phi) \text{-Jackson}\}.$$ From the construction of the local mutation operator and proposition~\ref{distinctDeliveryTimesProp} it is clear that for $\pi \in \mathcal{X}(\phi)_k$ $\exists !$ $\sigma \in \mathcal{X}(\phi)_{k+1}$ such that $p_{\pi \rightarrow \sigma} = \frac{1}{n^2}$ while $p_{\pi \rightarrow \tau} = 0$ whenever $\tau \neq \sigma$. Thereby, the waiting time random variable
\begin{equation}\label{watingToAbserbtionSumDecompose}
T_{\pi} = \sum_{i=k}^{n-1} T_{\pi_{k} \rightarrow \pi_{k+1}}
\end{equation}
where $\pi = \pi_k, \, \pi_{k+1}, \ldots, \pi_{n-1}, \pi_n = \widetilde{\pi}(\phi)$ is the unique path to reach the $(\delta, \, \phi)$-Jackson schedule $\widetilde{\pi}(\phi)$ starting with the schedule $\pi_k$ via consecutive local mutations while $T_{\pi_{k} \rightarrow \pi_{k+1}}$ denotes the waiting time between the corresponding successive states. Notice that the random variables $T_{\pi_{k} \rightarrow \pi_{k+1}}$ are geometric and each is bounded above by a geometrically distributed random variable $G$ having success probability $\frac{1}{n^2}$. Thus $\forall \, k$ with $0 \leq k \leq n-1$ the probability
\begin{equation}\label{boundOnLocalMutTime1}
Pr(T_{\pi_{k} \rightarrow \pi_{k+1}} > n^3) \leq \left(1 - \frac{1}{n^2} \right)^{n^3} = e^{-\Omega(n)}
\end{equation}
Since there are totally at most $n$ summands in decomposition~\ref{watingToAbserbtionSumDecompose} we obtain $$Pr(T_{\pi} > n^4) \leq \sum_{k=0}^{n-1}Pr(T_{\pi_{k} \rightarrow \pi_{k+1}} > n^3) \overset{\text{by \ref{boundOnLocalMutTime1}}}{\leq}n \cdot e^{-\Omega(n)}$$ which is precisely the desired conclusion.
\end{IEEEproof}

We now proceed to define the global mutation operator in detail:

$\;$

\textbf{Global Mutation Operator $\mu^{\text{global}}$:} Given a schedule (i.e. a permutation) $\pi$ on $(1, \, 2, \ldots, n)$ select a permutation $\sigma$ on $\{1, \, 2, \ldots, n\}$ uniformly at random and set $\pi:= \sigma$.

$\;$

In words, the global mutation operator simply reorders all the jobs within a given schedule uniformly at random. The property of global mutation that is behind the design of our hybrid $2+2$-EA is the following:
\begin{lem}\label{globalMutPropLem}
Suppose a given initial sequence $\{J_i\}_{i=1}^n$ of jobs is generic (see definition~\ref{genericScheduleDefn}). Let $\epsilon > 0$ be given so that $\delta = P \cdot \epsilon$. Consider the following simple algorithm:

$\,$

Start with a permutation (schedule) $\pi$ on $\{1, \, 2, \ldots, n\}$.\\
\textbf{Repeat}\\
$\{$ With probability $\frac{1}{n^5}$ apply a global mutation operator to the schedule $\pi$ as above. Otherwise, apply a local mutation operator (a local mutation operator is applied with probability $1 - \frac{1}{n^5}$). $\}$

$\,$

Let $\vec{\pi} = (\pi_0, \, \pi_1, \ldots, \pi_l, \ldots)$ denote a typical sequence of outputs of the algorithm above (i.e. an element of the probability space for the induced stochastic (in fact, Markovian) process). Let $\pi_t$ denote a schedule after $t$ repetitions of the algorithm. Consider the following events. Let $l$ and $m$ denote random times (i.e. $\mathbb{N}$-valued random variables) and let
\begin{equation}\label{noMutationSetEqDefn}
N(l, \, m) = \{\vec{\pi} \, | \, l \leq j < i \leq m \Longrightarrow \pi_j \sim \pi_i\}.
\end{equation}
In words, $N(l, \, m)$ is the event that no global mutation takes place between times $l$ and $m$ (recall remark~\ref{partitionIntoPhiJacksonRem}, definition~\ref{JacksonPartitionDefn} and lemma~\ref{localMutationComplexityLem}). Then, as long $l \leq m \leq l+n^4$ almost surely, we have $Pr(N(l, \, m)) \geq 1 - \frac{1}{\Omega(n)}$.

In addition to the random times $l$ and $m$ as above, the next family of events we consider depends on the choices of $I \subseteq \{1, \, 2, \ldots, n\}$ with $|I| = |B_{\delta}|$ and $\phi: I \rightarrow B_{\delta}$.
\begin{equation}\label{EncounterEventEq}
Y(l, \, m, \, \phi) = \{\vec{\pi} \, | \, \exists \, j \text{ s. t. } l \leq j \leq m \text{ and } \pi_j \in \mathcal{X}(\phi)\}.
\end{equation}
In words, $Y(l, \, m, \, \phi)$ is the event that a $(k, \, \delta, \, \phi)$-Jackson schedule has been encountered at least once between the times $l$ and $m$ (see remark~\ref{partitionIntoPhiJacksonRem} and definition~\ref{JacksonPartitionDefn}). We claim that whenever $m \geq l+n^{\frac{1}{\epsilon} + 6}$ we have $Pr(Y(l, \, m, \, \phi)) \geq 1 - e^{\Omega(n)}$.\footnote{Just as in lemma~\ref{localMutationComplexityLem} with the exception that the dual of the property expressed in \ref{exponentiationOfeEq}, namely the one expressed in \ref{exponentiationOfeDualEq} is used instead, the constant in the big $\Omega$ notation in both inequalities can be made arbitrarily close to $1$ as long as it is smaller than 1.}
\end{lem}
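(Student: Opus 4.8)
The plan is to treat both assertions as statements about the failure probability of a long run of independent Bernoulli trials, each governed by the per-step coin flip that decides between the local and the global operator, and then to convert the resulting powers of the shape $(1-n^{-\alpha})^{n^{\beta}}$ into the advertised bounds by invoking Theorem~\ref{propOfe}. The two claims sit in opposite regimes of $\alpha$ versus $\beta$: for $N(l,m)$ we have $\alpha>\beta$ and use the dual estimate \ref{exponentiationOfeDualEq} followed by \ref{basicInequalAboute}, whereas for $Y(l,m,\phi)$ we have $\beta>\alpha$ and use \ref{exponentiationOfeEq}.

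For the first claim I begin by noting that $\mu^{\mathrm{local}}_{\delta}$ never leaves a $\sim$-class: by construction it performs a (possibly trivial) transposition only when both selected jobs $\pi(i)$ and $\pi(j)$ lie in $A_{\delta}$, in which case it merely exchanges two short jobs and leaves every member of $B_{\delta}$ in place, and in all remaining cases it does nothing. Hence the set of positions occupied by the long jobs, and therefore the associated bijection $\phi$ of remark~\ref{partitionIntoPhiJacksonRem}, is invariant under local mutation, so a class can be left only through a global mutation. Consequently the event that no global mutation is selected at any of the at most $n^4$ steps of the window is contained in $N(l,m)$, and since the global operator is chosen independently with probability $n^{-5}$ at each step,
\[
Pr(N(l,m)) \ge \left(1 - \frac{1}{n^{5}}\right)^{n^{4}}.
\]
Applying \ref{exponentiationOfeDualEq} with $\alpha=5>\beta=4$ gives, for every $k<1$ and all large $n$, that $(1-n^{-5})^{n^{4}}>e^{-k/n}$, and \ref{basicInequalAboute} turns this into $e^{-k/n}>1-k/n$; hence $Pr(N(l,m))\ge 1-k/n=1-\frac{1}{\Omega(n)}$, as required.

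The crux of the second claim is a counting estimate for the probability that a single global mutation hits the class $\mathcal{X}(\phi)$. By remark~\ref{partitionIntoPhiJacksonRem} and Definition~\ref{JacksonPartitionDefn}, a schedule lies in $\mathcal{X}(\phi)$ precisely when the $|B_{\delta}|$ long jobs occupy the positions and assignment prescribed by $\phi$, the $n-|B_{\delta}|$ short jobs being free to fill the remaining positions in any order; thus $|\mathcal{X}(\phi)|=(n-|B_{\delta}|)!$ and a uniformly random permutation, which is exactly the output of $\mu^{\mathrm{global}}$, lands in $\mathcal{X}(\phi)$ with probability
\[
\frac{(n-|B_{\delta}|)!}{n!} = \frac{1}{\prod_{i=0}^{|B_{\delta}|-1}(n-i)} \ge \frac{1}{n^{|B_{\delta}|}} \ge \frac{1}{n^{1/\epsilon}},
\]
the last inequality being Lemma~\ref{totalNumbOfPositionsLem}. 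Because the operator choice and the permutation it produces are independent of the past and across steps, each step is an independent success (``a global mutation lands in $\mathcal{X}(\phi)$'') with probability at least $n^{-5}\cdot n^{-1/\epsilon}=n^{-(1/\epsilon+5)}$. Over the at least $n^{1/\epsilon+6}$ steps of the window the probability of no success is at most $(1-n^{-(1/\epsilon+5)})^{n^{1/\epsilon+6}}$, and \ref{exponentiationOfeEq} with $\alpha=1/\epsilon+5$, $\beta=1/\epsilon+6$ (so $\beta-\alpha=1$) bounds this by $e^{-kn}$ for every $k<1$ and all large $n$. Therefore $Pr(Y(l,m,\phi))\ge 1-e^{-kn}=1-e^{-\Omega(n)}$ (the statement's $1-e^{\Omega(n)}$ being a typographical slip for $1-e^{-\Omega(n)}$).

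The main obstacle I anticipate is not the analytic estimate but the probabilistic bookkeeping when $l$ and $m$ are genuine random times rather than constants. The clean way around this is to observe that the per-step coin flips (local versus global) and the random permutations generated by $\mu^{\mathrm{global}}$ are i.i.d.\ and independent of the history, so the two bounds above hold uniformly over the position of the window and depend only on its length; conditioning on $\{m-l\le n^4\}$ (respectively $\{m-l\ge n^{1/\epsilon+6}\}$) then preserves the estimates. A secondary point deserving care is the invariance of the $\sim$-class under $\mu^{\mathrm{local}}_{\delta}$, which must be read off from the case analysis defining the operator (it only ever swaps two elements of $A_{\delta}$); this is precisely what decouples the local dynamics, governing the length of $N(l,m)$, from the global dynamics that determine whether $\mathcal{X}(\phi)$ is visited in $Y(l,m,\phi)$.
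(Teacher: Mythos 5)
Your proposal is correct and follows essentially the same route as the paper's own proof: both claims are reduced to products of independent per-step Bernoulli bounds, $\left(1-\frac{1}{n^5}\right)^{n^4}$ for $N(l,m)$ and $\left(1-\frac{1}{n^{1/\epsilon+5}}\right)^{n^{1/\epsilon+6}}$ for $Y(l,m,\phi)$, with the same per-step success probability $n^{-(1/\epsilon+5)}$ and the same conversion via theorem~\ref{propOfe}. If anything you are somewhat more careful than the paper at three points it glosses over — the exact count $|\mathcal{X}(\phi)|=(n-|B_{\delta}|)!$ underlying the per-step hitting probability (the paper just says each $\phi$ among fewer than $n^{1/\epsilon}$ choices is selected uniformly), the invariance of the $\sim$-class under $\mu^{\text{local}}_{\delta}$ that makes the no-global-mutation event a subset of $N(l,m)$ as formally defined, and the bookkeeping for random times $l$ and $m$ — and your reading of the stated bound $1-e^{\Omega(n)}$ as a typographical slip for $1-e^{-\Omega(n)}$ agrees with what the paper's proof actually derives.
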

\begin{IEEEproof}
Notice that $Pr(N(l, \, m))$ is the probability that no global mutation takes place during the consecutive $m-l$ steps. As long as these consecutive ``no occurrence" events happen independently with probabilities $1 - \frac{1}{n^5}$ during at most $n^4$ consecutive time steps it follows that $$Pr(N(l, \, m)) \geq \left(1 - \frac{1}{n^5}\right)^{n^4} = \left(\left(1 - \frac{1}{n^5}\right)^{n^5}\right)^{\frac{1}{n}}  \geq e^{-\frac{1}{\Omega(n)}} \geq 1 - \frac{1}{\Omega(n)}$$ as claimed.

Since there are totally less than $n^{\frac{1}{\epsilon}}$ choices of the possible bijections $\phi$ and the global mutations take place independently with probability $\frac{1}{n^5}$ selecting each such $\phi$ uniformly at random, it follows that the probability of obtaining a $(k, \, \delta, \, \phi)$-Jackson schedule after any single time step is at least $\frac{1}{n^{\frac{1}{\epsilon}+5}}$. Therefore, the probability of not encountering a $(k, \, \delta, \, \phi)$-Jackson schedule after any single time step is at most $1 - \frac{1}{n^{\frac{1}{\epsilon}+5}}$. Once again, by independence together with the assumption that $m \geq l+n^{\frac{1}{\epsilon} + 6}$ we deduce that $$Pr(\overline{Y(l, \, m, \, \phi)}) \leq \left(\left(1 - \frac{1}{n^{\frac{1}{\epsilon}+5}}\right)^{n^{\frac{1}{\epsilon} + 5}}\right)^n = e^{-\Omega(n)}$$ where $\overline{Y(l, \, m, \, \phi)}$ denotes the event that no $(k, \, \delta, \, \phi)$-Jackson schedule is ever encountered between $l^{\text{th}}$ and $m^{\text{th}}$ time steps; in other words, the complement of the event $Y(l, \, m, \, \phi)$. The desired conclusion that $$Pr(Y(l, \, m, \, \phi)) = 1 - Pr(\overline{Y(l, \, m, \, \phi)}) \geq 1 - e^{-\Omega(n)}$$ now follows and finishes the proof.
\end{IEEEproof}
The algorithm in the statement of lemma~\ref{globalMutPropLem} motivates the construction of a hybrid $2+2$ EA to minimize the maximum lateness of a schedule up to an approximation ratio of $(1+\epsilon)$ within time polynomial in $n$ (the number of jobs in the schedule) and $\frac{1}{\epsilon}$ with an overwhelmingly high probability. Essentially, the first individual in the population undergoes consecutive mutations exactly as in the algorithm described in the statement of lemma~\ref{globalMutPropLem} while the second individual stores the fittest schedule (i.e. the schedule having the minimal maximal lateness as in definition~\ref{startingTimeDef}) encountered up to the current time.

$\;$

\textbf{Hybrid $2+2$-EA:} The initial population $P_0 = (\pi(0), \, \sigma(0))$ where $\pi(0)$ and $\sigma(0)$ are arbitrary schedules (i.e. permutations on $\{1, \, 2, \ldots, n\}$). At every iteration of the algorithm the following takes place: given a population $P_t = (\pi(t), \, \sigma(t))$ after $t$ time steps of the algorithm, let $\pi(t+1)$ be obtained from $\pi(t)$ via an application of the global mutation operator with probability $\frac{1}{n^5}$ and via an application of the local mutation operator with a significantly higher complementary probability of $1 - \frac{1}{n^5}$. \textbf{IF} $L_{\pi(t+1)} < L_{\sigma(t)}$ \textbf{THEN} set $\sigma(t+1):=\pi(t+1)$ \textbf{ELSE} set $\sigma(t+1):=\sigma(t)$. Now obtain the new generation after $t+1$ time steps, $P_{t+1} = (\pi(t+1), \, \sigma(t+1))$.

$\;$

\begin{thm}\label{finalGoalThm}
Suppose we start with a generic (see definition~\ref{genericScheduleDefn} and discussion that follows) sequence of jobs $\{J_i\}_{i=1}^n$ and the Hybrid $2+2$-EA described above runs for at least $n^{\frac{1}{\epsilon} + 6}+n^4$ time steps. Then the probability that a schedule having the maximal lateness smaller than $(1+\epsilon)J^*$ has not been encountered is at most $\frac{1}{\Omega(n)}$ where the constant in the $\Omega$ notation can be made arbitrarily close to $1$. More generally, the probability that a schedule having the maximal lateness smaller than $(1+\epsilon)J^*$ has not been encountered after $\lambda\left(n^{\frac{1}{\epsilon} + 6}+n^4\right)$ time steps is at most $\Omega(n)^{-\lambda}$. In particular, the probability that such a schedule is not found after running the hybrid $2+2$-EA at least $n^{\frac{1}{\epsilon} + 7}+n^5$ time steps is $\Omega(n)^{-n}$ which is overwhelmingly small.
\end{thm}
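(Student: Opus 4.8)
The plan is to run a two-phase argument pinned to the distinguished bijection furnished by Theorem~\ref{mainThmForDesign}. Write $\phi^{*}$ for that bijection, so that the unique $(\delta, \, \phi^{*})$-Jackson schedule $\widetilde{\pi}(\phi^{*})$ is $\epsilon$-optimal (that is, it satisfies inequality~\ref{approxRatioGoalAlternEq}, hence inequality~\ref{approxRatioGoalEq}). Because the Hybrid $2+2$-EA mutates its first individual exactly according to the process analyzed in Lemma~\ref{globalMutPropLem}, while its second individual retains the fittest schedule seen so far, it suffices to bound the probability that the trajectory $\vec{\pi} = (\pi_{0}, \, \pi_{1}, \, \ldots)$ never encounters an $\epsilon$-optimal schedule: a single visit to $\widetilde{\pi}(\phi^{*})$ already forces $\sigma(t)$ to store an $\epsilon$-optimal schedule from that point on.

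First I would prove the single-block estimate: after $n^{\frac{1}{\epsilon}+6}+n^{4}$ steps the failure probability is at most $\frac{1}{\Omega(n)}$. Applying the event $Y(0, \, n^{\frac{1}{\epsilon}+6}, \, \phi^{*})$ of Lemma~\ref{globalMutPropLem}, the trajectory visits the class $\mathcal{X}(\phi^{*})$ at some stopping time $\tau \leq n^{\frac{1}{\epsilon}+6}$ except with probability $e^{-\Omega(n)}$. Conditioning on the value of $\tau$ and invoking the Markov property at $\tau$, I would then filter the interval $[\tau, \, \tau+n^{4}]$ through two further events: the no-global-mutation event $N(\tau, \, \tau+n^{4})$, which by Lemma~\ref{globalMutPropLem} fails with probability at most $\frac{1}{\Omega(n)}$ and is state-independent since at each step a global mutation is chosen with the fixed probability $\frac{1}{n^{5}}$; and, conditioned on $N$, the pure local chain $\mathcal{M}$ on $\mathcal{X}(\phi^{*})$ of Lemma~\ref{localMutationComplexityLem}, which absorbs into $\widetilde{\pi}(\phi^{*})$ within $n^{4}$ steps except with probability $ne^{-\Omega(n)}$, uniformly over the entry state $\pi_{\tau} \in \mathcal{X}(\phi^{*})$. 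A union bound over the three failure modes yields a block-failure probability at most $e^{-\Omega(n)} + \frac{1}{\Omega(n)} + ne^{-\Omega(n)} = \frac{1}{\Omega(n)}$, the polynomial term dominating; this is the first assertion of the theorem, and by Theorem~\ref{propOfe} the constant hidden in $\frac{1}{\Omega(n)}$ may be driven arbitrarily close to $1$.

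To reach the general bound I would partition the $\lambda(n^{\frac{1}{\epsilon}+6}+n^{4})$ steps into $\lambda$ consecutive blocks of length $n^{\frac{1}{\epsilon}+6}+n^{4}$. The decisive point is that the single-block estimate holds \emph{uniformly in the schedule that opens the block}: the event $Y$ is driven solely by global mutations, which reset the schedule uniformly at random irrespective of the current state, the event $N$ is state-independent, and the absorption bound of Lemma~\ref{localMutationComplexityLem} is uniform over $\mathcal{X}(\phi^{*})$. Hence, peeling off one block at a time and using the Markov property at the block boundaries, the probability that \emph{every} block fails is at most $\left(\frac{1}{\Omega(n)}\right)^{\lambda} = \Omega(n)^{-\lambda}$. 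Specializing $\lambda = n$, so that $\lambda(n^{\frac{1}{\epsilon}+6}+n^{4}) = n^{\frac{1}{\epsilon}+7}+n^{5}$, delivers the overwhelmingly small failure probability $\Omega(n)^{-n}$.

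I expect the principal obstacle to be the rigorous treatment of the random entry time $\tau$ in the single-block step: one must use the Markov property at $\tau$ to decouple the ``reach $\mathcal{X}(\phi^{*})$'' phase from the ``complete the local ascent uninterrupted'' phase, and then justify intersecting the state-independent no-mutation event $N$ with the local-absorption event of Lemma~\ref{localMutationComplexityLem} after conditioning on $\pi_{\tau} \in \mathcal{X}(\phi^{*})$. The accompanying subtlety is verifying that each of the three per-block filters is genuinely uniform in the block's initial state, since it is precisely this uniformity, rather than any true independence of the (dependent) blocks, that licenses the product bound $\Omega(n)^{-\lambda}$.
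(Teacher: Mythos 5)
Your proposal is correct and takes essentially the same route as the paper's own proof: fix the bijection $\phi^{*}$ supplied by Theorem~\ref{mainThmForDesign}, use the event $Y(0,\,n^{\frac{1}{\epsilon}+6},\,\phi^{*})$ of Lemma~\ref{globalMutPropLem} to reach $\mathcal{X}(\phi^{*})$ at a random time, intersect the no-global-mutation event $N$ over the next $n^{4}$ steps with the absorption bound of Lemma~\ref{localMutationComplexityLem}, and then peel off $\lambda$ consecutive blocks via the Markov property to get $\Omega(n)^{-\lambda}$. The only difference is cosmetic: you combine the three per-block failure modes with a union bound, whereas the paper multiplies success probabilities after asserting independence of $Y$ and the event $\text{LocalGoal}_{\text{first}}$ via the strong Markov property; both give the same $1-\frac{1}{\Omega(n)}$ single-block estimate, and your explicit insistence on uniformity in the block's opening state is exactly what the paper's inductive step tacitly relies on.
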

\begin{IEEEproof}
Recall from theorem~\ref{mainThmForDesign} that one can always choose a suitable subset of indices $I \subseteq \{1, \, 2, \ldots n\}$ and a bijection
$\phi^*: I \rightarrow B_{\delta}$ such that any $(\delta, \, \phi^*)$-Jackson schedule $\pi^*$ achieves the maximal lateness no bigger than $(1+\epsilon)J^*$. Under the mild assumption that the given sequence of jobs $\{J_i\}_{i=1}^n$ is generic (see proposition~\ref{distinctDeliveryTimesProp}, definition~\ref{genericScheduleDefn} and the discussion following definition~\ref{genericScheduleDefn}) such a $(\delta, \, \phi^*)$-Jackson schedule is unique. According to lemma~\ref{globalMutPropLem} a $(k, \, \delta, \, \phi^*)$-Jackson schedule will be encountered at least once as the first individual of the population between times $0$ and $n^{\frac{1}{\epsilon} + 6}$ is at least $1 - e^{-\Omega(n)}$. In other words, the probability of the event
\begin{equation}\label{probOfEncounterBestEquivEq}
Pr\left(Y(0, \, n^{\frac{1}{\epsilon} + 6}\, \phi^*)\right) \geq 1 - e^{-\Omega(n)}.
\end{equation}
Now let $l_{\text{first}}$ denote the first (random) time when a $(k, \, \delta, \, \phi^*)$-Jackson schedule has been encountered and consider the event $N(l_{\text{first}}, \, l_{\text{first}} + n^4)$ as in lemma~\ref{globalMutPropLem}. Recall from lemma~\ref{globalMutPropLem} that $N(l_{\text{first}}, \, l_{\text{first}} + n^4)$ is the event that no global mutation takes place between the times $l_{\text{first}}$ and $l_{\text{first}} + n^4$ and the fact that
\begin{equation}\label{noGlobalMutationBoundEq}
Pr\left(N(l_{\text{first}}, \, l_{\text{first}} + n^4)\right) \geq 1 - \frac{1}{\Omega(n)}
\end{equation}
Since our goal is to encounter the $(\delta, \, \phi^*)$-Jackson schedule, we also need to consider the random time $r_{\text{first}}$ when the $(\delta, \, \phi^*)$-Jackson schedule has been encountered for the first time after the first $(k, \, \delta, \, \phi^*)$-Jackson schedule has been encountered as well as the events \begin{equation}\label{localGoalReachedEqDefn}
E = \{\vec{\pi} \, | \, r_{\text{first}} \leq l_{\text{first}} + n^4\} \text{ and } E \cap N(l_{\text{first}}, \, l_{\text{first}} + n^4).
\end{equation}
Given that the event $N(l_{\text{first}}, \, l_{\text{first}} + n^4)$ has taken place, according to lemma~\ref{localMutationComplexityLem}, the conditional probability
\begin{equation}\label{conditionaProbIneqal}
Pr\left(E \, | \, N(l_{\text{first}}, \, l_{\text{first}} + n^4)\right) \geq 1 - n \cdot e^{-\Omega(n)}
\end{equation}
so that
\begin{align}
Pr\left(E \cap N(l_{\text{first}}, \, l_{\text{first}} + n^4)\right) &=Pr\left(E \, | \, N(l_{\text{first}}, \, l_{\text{first}} + n^4)\right) \cdot Pr\left(N(l_{\text{first}}, \, l_{\text{first}} + n^4)\right) \nonumber \\&\overset{\text{thanks to inequalities~\ref{conditionaProbIneqal} and \ref{noGlobalMutationBoundEq}}}{\geq} \left(1 - n \cdot e^{-\Omega(n)}\right) \cdot \left(1 - \frac{1}{\Omega(n)} \right) \nonumber\\
& \geq 1 - \frac{1}{\Omega(n)} \label{inersectProbIntermEq}
\end{align}
In words, the event
\begin{equation}\label{localGoalEventDefnEq}
\text{LocalGoal}_{\text{first}} = E \cap N(l_{\text{first}}, \, l_{\text{first}} + n^4)
\end{equation}
is the event that no global mutation takes place between the times $l_{\text{first}}$ and $l_{\text{first}} + n^4$ and, at the same time, the $(\delta, \, \phi^*)$-Jackson schedule has been reached between the times $l_{\text{first}}$ and $l_{\text{first}} + n^4$ (using local mutation operators only). We summarize inequality~\ref{inersectProbIntermEq} as
\begin{equation}\label{intersectSummaryInequal}
Pr \left(\text{LocalGoal}_{\text{first}}\right) \geq 1 - \frac{1}{\Omega(n)}.
\end{equation}
Notice the important fact that the events $\text{LocalGoal}_{\text{first}}$ and $Y(0, \, n^{\frac{1}{\epsilon} + 6}\, \phi^*)$ are independent: indeed, according to the strong Markov property, $$Pr(\text{LocalGoal}_{\text{first}} |  Y(0, \, n^{\frac{1}{\epsilon} + 6}\, \phi^*))  =Pr(\text{LocalGoal}_{\text{first}} \, | \, \overline{Y(0, \, n^{\frac{1}{\epsilon} + 6}\, \phi^*)} \, )$$ where $\overline{Y(0, \, n^{\frac{1}{\epsilon} + 6}\, \phi^*)}$ denotes the complement of the event $Y(0, \, n^{\frac{1}{\epsilon} + 6}\, \phi^*)$, so that the desired conclusion follows from an elementary property of conditional probability.

Consider now the sub-event
\begin{equation}\label{intermEventEqDefn}
Z(0, \, n^{\frac{1}{\epsilon} + 6}\, \phi^*) = Y(0, \, n^{\frac{1}{\epsilon} + 6}\, \phi^*) \cap \text{LocalGoal}_{\text{first}}.
\end{equation}
In words, $Z(0, \, n^{\frac{1}{\epsilon} + 6}\, \phi^*)$ is the event that a $(k, \, \delta, \, \phi^*)$-Jackson schedule has been encountered some time within the first $n^{\frac{1}{\epsilon} + 6}$ time steps and, as soon as it was encountered, no global mutation took place during consecutive $n^4$ time steps and, during these consecutive $n^4$ time steps, the $(\delta, \, \phi^*)$-Jackson schedule has been encountered. In particular, if the event $Z(0, \, n^{\frac{1}{\epsilon} + 6}\, \phi^*)$ takes place then the $(\delta, \, \phi^*)$-Jackson schedule has been encountered within at most $n^{\frac{1}{\epsilon} + 6}+n^4$ time steps so that the event
\begin{equation}\label{targetEventEqDefn}\text{Target}(n^{\frac{1}{\epsilon} + 6}+n^4) = \{\vec{\pi} \, | \, \exists \, j \text{ with } 0 \leq j \leq n^{\frac{1}{\epsilon} + 6}+n^4 \mbox{ and } \pi_j \text{ is } (\delta, \, \phi^*)-\text{Jackson}\} \supseteq Z(0, \, n^{\frac{1}{\epsilon} + 6}\, \phi^*)
\end{equation}
Thanks to the set inclusion relation expressed in \ref{targetEventEqDefn} together with the independence of the events $\text{LocalGoal}_{\text{first}}$ and $Y(0, \, n^{\frac{1}{\epsilon} + 6}\, \phi^*)$, we finally deduce that
\begin{align}Pr(\text{Target}(n^{\frac{1}{\epsilon} + 6}+n^4)) &\geq Pr\left(Z(0, \, n^{\frac{1}{\epsilon} + 6}\, \phi^*)\right)  \nonumber \\
 &=Pr\left( Y(0, \, n^{\frac{1}{\epsilon} + 6}\, \phi^*) \right) \cdot Pr\left( \text{LocalGoal}_{\text{first}} \right)  \nonumber \\
 &\overset{\text{from inequalities \ref{probOfEncounterBestEquivEq} and \ref{intersectSummaryInequal}}}{\geq} \left(1 - e^{-\Omega(n)}\right) \cdot \left( 1- \frac{1}{\Omega(n)}\right)\nonumber \\
 \label{finalDeductionIneqal}
&= 1- \frac{1}{\Omega(n)}
\end{align} 
Inequality~\ref{finalDeductionIneqal} tells us that the $(\delta, \, \phi^*)$-Jackson schedule is encountered as the first individual in the population of our hybrid $2+2$-EA within the first $n^{\frac{1}{\epsilon} + 6}+n^4$ time steps with probability at least $1- \frac{1}{\Omega(n)}$. According to theorem~\ref{mainThmForDesign} the $(\delta, \, \phi^*)$-Jackson schedule provides the desired approximation of the optimal fitness with the approximation ratio of $(1 + \epsilon)$ and the first assertion of theorem~\ref{finalGoalThm} is established. To see the next assertion, consider the complementary event $\mathcal{N}(1)$ that the optimal schedule is not encountered during the first $n^{\frac{1}{\epsilon} + 6}+n^4$ time steps of our hybrid $2+2$-EA. Clearly
\begin{equation}\label{negativeEventBoundIneq}
Pr \left(\mathcal{N}(1)\right) \leq 1 - \left(1- \frac{1}{\Omega(n)} \right) = \frac{1}{\Omega(n)}
\end{equation}
By the Markov property the probability of the event $\mathcal{N}(1 \leftrightarrows 2)$ that the $(\delta, \, \phi^*)$-Jackson schedule is never encountered after the
first $n^{\frac{1}{\epsilon} + 6}+n^4$ time steps and before $2 \left(n^{\frac{1}{\epsilon} + 6}+n^4\right)$ time steps is bounded above in the same way as the probability of the event $\mathcal{N}(1)$ and therefore it follows that the probability that
the $(\delta, \, \phi^*)$-Jackson schedule is never encountered before the first consecutive $2 \left(n^{\frac{1}{\epsilon} + 6}+n^4\right)$ time steps is
at most $Pr(\mathcal{N}(1)) \cdot Pr(\mathcal{N}(1 \leftrightarrows 2)) \leq \frac{1}{\Omega(n)} \cdot \frac{1}{\Omega(n)} = \left(\frac{1}{\Omega(n)}\right)^2$.
Continuing in this manner inductively, we deduce that $\forall \, \lambda \in \mathbb{N}$ the probability that the $(\delta, \, \phi^*)$-Jackson schedule is never encountered after the first $\lambda \left(n^{\frac{1}{\epsilon} + 6}+n^4\right)$ time steps is at most $\Omega(n)^{-\lambda}$. In particular, when $\lambda = n$ we deduce that the probability of never encountering the $(\delta, \, \phi^*)$-Jackson schedule after $n^{\frac{1}{\epsilon} + 7}+n^5$ time steps is at most $\Omega(n)^{-n}$ immediately implying the last conclusion.
\end{IEEEproof}
\section{Conclusions and Future Work}
In the current paper we have designed a parameterized family of hybrid $2+2$ EAs for the single machine scheduling problem as described in section~\ref{technicalPartSect} (alternatively, see \cite{LeslieHall} for a more detailed description of the single machine scheduling problem with no precedence constraints). The hybrid EA combines local and global mutation operators. The local mutation drives towards a ``partial Jackson rule" where the notion of ``partial" depends largely on the approximation parameter $\epsilon$, while the global mutation operator is the pure random search. The main result of this work is theorem~\ref{finalGoalThm} where we have shown that for a given approximation parameter $\epsilon$ and the corresponding local mutation operator that depends on $\delta = \epsilon \cdot P$ where $P = \sum_{i=1}^n p_i$ is the sum of all the processing times of the given jobs to schedule, after $\lambda \left(n^{\frac{1}{\epsilon} + 6}+n^4\right)$ time steps our hybrid EA achieves a solution having fitness $(1+\epsilon) J^*$ with probability $\Omega(n)^{-\lambda}$ where $J^*$ denotes the optimum fitness and the constant in the $\Omega$ notation can be made arbitrarily close to $1$ (see definition~\ref{asymptoticNotationDefn} and inequalities~\ref{exponentiationOfeEq} and \ref{exponentiationOfeDualEq} within theorem~\ref{propOfe})  Our design and complexity analysis is based on rather elementary tools presented in section~\ref{toolsSect} together with basic Markov chain theory. Since hybrid evolutionary algorithms are rather popular nowadays and many of them combine local mutations driving towards an implementation of a certain well-known algorithm for a specific optimization problem, and global mutations which serve as some kind of a random search (see \cite{goh2009hybrid}, \cite{burke2003hyper}, \cite{Hart2005}, \cite{xhafa2007hybrid}, \cite{bambha2004systematic}, \cite{galinier1999hybrid} and \cite{preux1999towards} as well as a rather extensive discussion in the introduction) we hope that the mathematical ideas and techniques presented in this article can be successfully applied to analyze many other hybrid evolutionary algorithms.



%
\subsubsection*{Acknowledgement:}  This work is  supported by  the EPSRC under Grant EP/I009809/1.

\def\V{\rm vol.~}
\def\N{no.~}
\def\pp{pp.~}
\def\Pot{\it Proc. }
\def\IJCNN{\it International Joint Conference on Neural Networks\rm }
\def\ACC{\it American Control Conference\rm }
\def\SMC{\it IEEE Trans. Systems\rm , \it Man\rm , and \it Cybernetics\rm }

\def\handb{ \it Handbook of Intelligent Control: Neural\rm , \it
    Fuzzy\rm , \it and Adaptive Approaches \rm }

\end{document}